\pdfoutput=1
\documentclass{article}
\usepackage{iclr2027_conference,times}
\iclrfinalcopy

\usepackage{float}
\usepackage{hyperref}
\usepackage{url}

\usepackage{amsmath}
\usepackage{amssymb}
\usepackage{bm}

\usepackage{amsmath,amsfonts,bm}

\def\eqref#1{equation~\ref{#1}}

\def\1{\bm{1}}

\DeclareMathAlphabet{\mathsfit}{\encodingdefault}{\sfdefault}{m}{sl}
\SetMathAlphabet{\mathsfit}{bold}{\encodingdefault}{\sfdefault}{bx}{n}

\usepackage{booktabs}
\newcommand{\tablefont}{\scriptsize}
\usepackage[section]{placeins}
\usepackage{needspace}
\usepackage{multirow}
\usepackage{graphicx}
\usepackage{xcolor}
\usepackage{colortbl}
\definecolor{glmTint}{RGB}{225,237,250}
\definecolor{dsTint}{RGB}{253,241,225}
\definecolor{qwTint}{RGB}{224,246,233}

\newcommand{\modelcell}[4]{{\setlength{\fboxsep}{0.7pt}\colorbox{glmTint}{\makebox[#1][r]{\strut #2}}\hspace{0.12em}\colorbox{dsTint}{\makebox[#1][r]{\strut #3}}\hspace{0.12em}\colorbox{qwTint}{\makebox[#1][r]{\strut #4}}}}
\providecommand{\tablefont}{\scriptsize}
\newcommand{\estci}[3]{\begin{tabular}[c]{@{}r@{}}\strut #1\\[-0.2ex]\ensuremath{#2}\\[-0.2ex]\ensuremath{#3}\end{tabular}}
\newcommand{\statpoint}[1]{\strut #1}
\newcommand{\countci}[3]{\begin{tabular}[c]{@{}r@{}}\strut #1\\[-0.2ex]\ensuremath{[#2,}\\[-0.2ex]\ensuremath{#3]}\end{tabular}}

\usepackage{microtype}

\usepackage[capitalize,noabbrev]{cleveref}
\crefname{appendix}{Appendix}{Appendices}
\Crefname{appendix}{Appendix}{Appendices}

\usepackage{amsthm}
\theoremstyle{plain}
\newtheorem{theorem}{Theorem}[section]

\theoremstyle{definition}

\theoremstyle{remark}

\usepackage{enumitem}
\setlist[itemize]{leftmargin=*}
\setlist[enumerate]{leftmargin=*}

\usepackage{algorithm}
\usepackage{algpseudocode}

\hypersetup{
  pdftitle={When to Compile a Computer-Use Agent? Measuring Payback and Making Compilation Decisions for Token Efficiency},
  pdfauthor={Yulong Ming, Jie Xu, Zihan Wu, Xiaohua Jia}}

\newcommand{\codeurl}{https://github.com/myl7/computer-use}

\title{When to Compile a Computer-Use Agent?\\
Measuring Payback and Making Compilation Decisions for Token Efficiency}

\author{Yulong Ming, Jie Xu, Zihan Wu, Xiaohua Jia\\
City University of Hong Kong\\
\texttt{myl.7@my.cityu.edu.hk}}

\begin{document}

\maketitle
\lhead{Preprint}

\begin{abstract}
\looseness=-1
Compiling GUI procedures that agents execute repeatedly into programs can reduce their token costs.
However, measuring payback and deciding when to compile have two challenges.
First, compilation costs are uncertain because attempts can require repair and still fail to produce a usable program.
Second, future reuse is unknown because tasks may stop arriving or GUI drift may stop the program from working.
To address these challenges, we propose PACE (Payback-Aware Compilation from Experience), a system with a measurement protocol and an online compilation algorithm.
The measurement protocol records successful and failed compilation costs, and compares agent and program execution costs on matched task inputs to estimate per-use savings and payback counts.
Using these measurements, the online algorithm compares estimated future savings with compilation costs, including failed attempts, based on past task arrivals and compilation outcomes.
It checks execution and compilation charges against a cumulative budget determined by observed task arrivals before allowing either action.
Under stated action-cost assumptions, total cost after each arrival is at most $1+\epsilon$ times the cost of running every task with the agent.
For successful compilation attempts, estimated payback counts excluding source agent runs are $2$--$16$ uses.
In simulations using recorded task arrivals, PACE reduces token costs by $17.3\%$ compared with ReAct, $24.9\%$ with the AutoRPA adaptation, and $17.3\%$ with the ToolPro adaptation on average ($\epsilon=0.25$).
\end{abstract}

\section{Introduction}
\label{sec:intro}

Large language models have advanced rapidly.
This progress enables computer-use agents.
A computer-use agent interacts with applications through graphical user interfaces (GUIs).
At each step, the agent reads the GUI as both a screenshot and a structured text tree, e.g., an accessibility or DOM tree \citep{hong2024cogagent,zhang2025ufo}.
It then runs a ReAct-style \citep{yao2023react} agent loop, i.e., it calls the model to generate actions and executes them on the interface.
Task success rates keep rising on benchmarks of web, desktop, and mobile tasks \citep{zhou2024webarena,xie2024osworld,rawles2025androidworld}.
However, computer-use agents consume more tokens than general agents because of the screenshot inputs, making them more expensive to run.
Token efficiency, i.e., reducing the token cost, has become a first-order concern.

To improve token efficiency, prior systems reuse earlier experience.
When an agent solves similar tasks repeatedly, starting from scratch requires it to rediscover the same procedure each time.
To reduce this cost, prior systems store earlier experience as text (e.g., memory items or skills) \citep{wang2025awm,zheng2024synapse} or \textit{compile} it into parameterized programs by generating code \citep{wang2025asi,li2026preact,chen2026autorpa,elyadouni2026tracecompiler}.
A model call extracts the task inputs, which the program then uses to execute the procedure without further model calls.
We focus on the program method, which has the highest token efficiency.

However, a low execution cost does not mean that compilation saves tokens over the full task stream.
Prior audits report payback counts ranging from 4 \citep{chen2026autorpa} to 17 \citep{trooskens2026compiledai} uses, where the payback count is the number of uses required for token savings to cover the compilation cost.
Most systems report no payback count \citep{elyadouni2026tracecompiler,xu2026skillsmith}.
Moreover, these audits measure the costs of different system parts, making the numbers incomparable.
We therefore ask two questions.
\emph{How can we systematically measure the payback count?}
\emph{When is it worth compiling a program instead of continuing to use the agent?}

Measuring the payback count and deciding when to compile have two challenges.
\textbf{Uncertain compilation cost.}
Producing a program may require multiple compilation attempts.
Each attempt may consume tokens on repair, i.e., model calls that modify the program so it can complete the task.
If the attempt produces no usable program, the system pays the compilation cost and continues to use the agent.
\textbf{Unknown future use.}
The system does not know how many later tasks can reuse the program.
Similar tasks may stop arriving before they pay back the spend.
GUI changes that make a deployed program stop working, i.e., \emph{GUI drift}, can require a new compilation before further reuse.
Therefore, compiling early can waste tokens if tasks do not repeat, while waiting can lose savings on tasks that repeat often.
An online algorithm is required to make this choice using past observations.

\begin{figure}[htb]
\vspace{-2pt}%
\centering
\includegraphics[width=\linewidth]{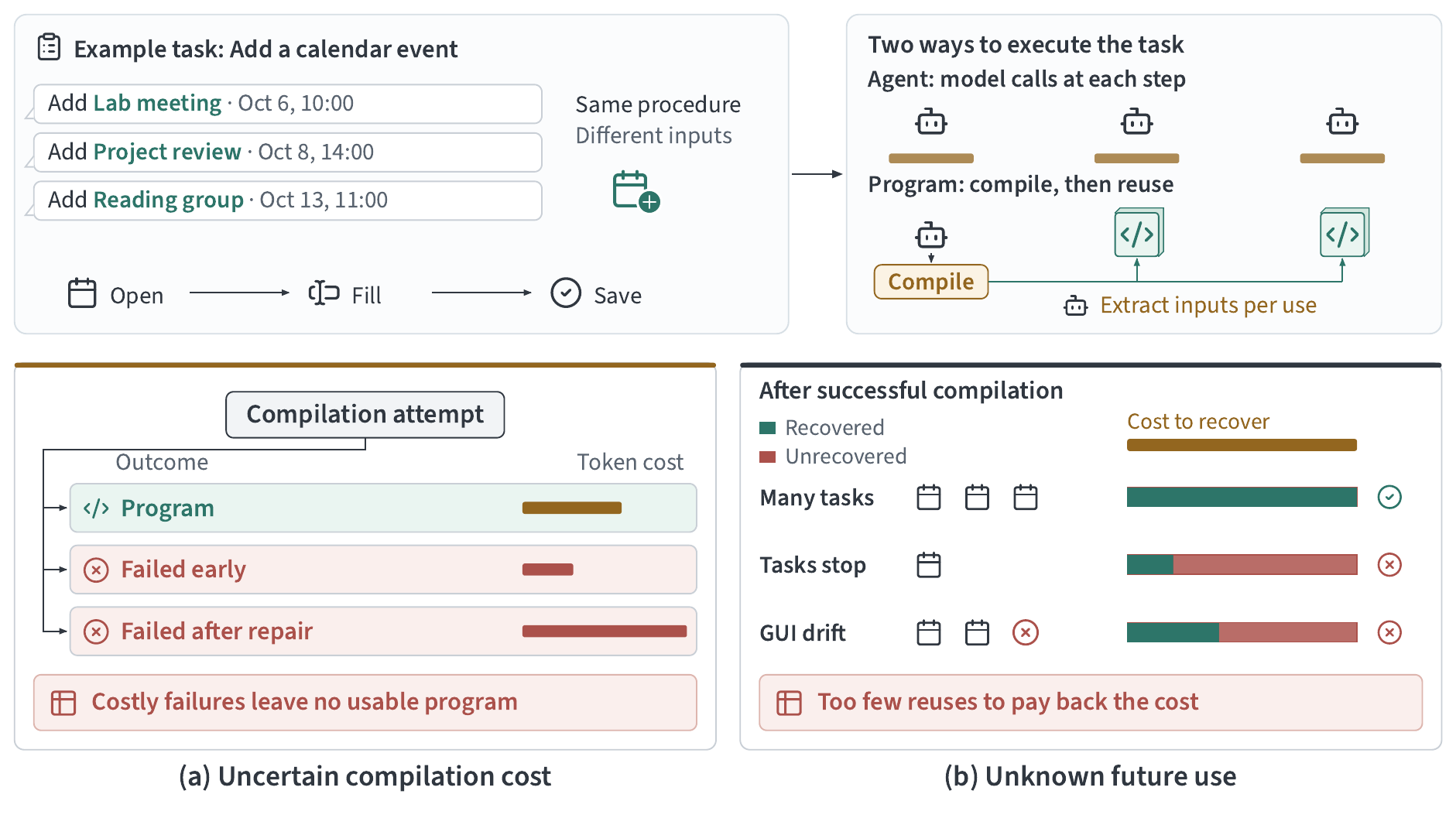}
\caption{\looseness=-1 Challenges in deciding when to compile GUI procedures.
Top: an example task and its agent and program execution paths.
In (a), red cost bars mark failed compilation attempts.
In (b), red bar segments show compilation costs left unrecovered when task arrivals stop or GUI drift ends reuse.}
\label{fig:teaser}
\vspace{-3pt}%
\end{figure}

We propose \emph{PACE} (Payback-Aware Compilation from Experience), a system with a measurement protocol and an online compilation algorithm.
PACE uses experience in two ways: completed agent traces provide the procedure to compile, while past task arrivals and compilation outcomes inform the decision to compile.
The measurement separates the cost of producing a program from the saving after that program exists.
We compare agent and program execution costs on matched task inputs, and estimate the additional cost of agent fallback after a detected program failure.
The protocol compiles recorded agent traces, and records the token costs of successful and failed attempts.
Deployment runs measure extraction cost and program failures.
Additional compilations test whether the initial outcome repeats, and agent replays on the program's task inputs check this cost comparison.
Across seven task families and three models, the measurements show per-use savings alongside differences in compilation cost and outcome.

PACE uses these measurements to decide when to compile under a cumulative cost constraint.
A cost-aware compilation proposal compares estimated future savings with the estimated compilation cost of producing a program, including failed attempts.
A separate budget check allows an action only when its possible cost fits the budget from task arrivals already observed.
Total cost includes task execution, routing, and every successful or failed compilation.
Under the stated cost model, PACE spends at most $1+\epsilon$ times the reference cost of running every task with the agent after each arrival.
The margin $\epsilon$ limits the extra spend even when future-reuse estimates are wrong or compilation keeps failing.
We evaluate the resulting savings and tradeoffs with measured cost profiles, synthetic arrivals, and arrival orders from public logs.

\Needspace{4\baselineskip}
Our contributions are:
\begin{itemize}
\item We propose a measurement protocol for the costs and savings of compiling computer-use tasks.
We report task-execution costs, compilation outcomes, repair costs, and payback counts under a shared accounting scheme, with repeated compilations and matched task-input replays.
\item We propose an online compilation algorithm that uses a cost budget to decide whether to compile.
PACE makes cost-aware proposals from past experience and checks their costs against a cumulative budget.
We prove a cumulative bound under action-cost assumptions.
\item We implement PACE and evaluate it under different cost settings and task arrival patterns.
We release the measurement records, policy implementations, and paired experimental outcomes.
With $\epsilon=0.25$, PACE reduces token costs by $17.3\%$ relative to ReAct, $24.9\%$ relative to the AutoRPA adaptation, and $17.3\%$ relative to the ToolPro adaptation on average across the recorded-arrival simulations.
Ablations separate the effects of the savings test, cost budget, and failed-attempt cost.
\end{itemize}

\section{Problem Formulation}
\label{sec:formulation}

We study when to compile GUI procedures that agents execute repeatedly while limiting cumulative cost relative to continued agent execution.
Tasks in one family share a GUI procedure but differ in their inputs, e.g., event titles and dates.

We consider two ways to execute, or \emph{serve}, a task.
In \emph{agent execution}, the model selects GUI actions step by step, with mean cost $c$.
In \emph{program execution}, a model call extracts the parameter values from the task prompt, i.e., a \emph{binding}.
The program then executes the shared procedure with these values, without further model calls.
Its model cost $c^{\mathrm{prog}}$ is the extraction cost.
We use weighted tokens: one unit corresponds to one uncached input token at the model's reference price, with cached-input and output tokens weighted by their relative prices.
Appendix~\ref{app:accounting} explains how we calculate these costs.

A compilation attempt uses $k$ completed agent traces.
The attempt charge includes all model calls made during compilation.
Let $p$ be the probability that compilation succeeds, $C$ the mean cost of a successful attempt, and $C^{\mathrm{fail}}$ the mean cost of a failed attempt.
A failed attempt still consumes tokens.
In the online model, these cost values become fixed charges for the corresponding outcomes.
Historical means alone do not bound future actual costs.

\looseness=-1
On each program execution, GUI drift makes the program stop working with probability $h$.
If there is no GUI drift, execution fails with probability $q$.
Either event requires one agent fallback of cost $c$, but only GUI drift stops further program reuse.
The total fallback probability is $h+(1-h)q$, reducing to $q$ when $h=0$.
The cost model assumes detected failures and reports task success separately.
Routing selects an available program or the agent for the current task.
It costs $\tau_0+m\ell_t$ when $\ell_t$ programs are listed, where $\tau_0$ is the cost with an empty list and $m$ is the cost per entry.

\paragraph{Objective.}
We compare PACE with serving every arrival through agent execution, without compiling or reusing programs.
For arrival $t$, define this agent execution reference charge as $a_t=\tau_0+c$ using that family's given cost, and let
\begin{equation}
A_t=\sum_{i=1}^{t}a_i.
\label{eq:agent-reference}
\end{equation}
This reference includes program-served arrivals.
It is a model-based comparison, not an extra agent execution observed on each task.
Let $K_t^\pi$ include policy $\pi$'s routing, serving, fallback, and all compilation charges through arrival $t$, including compilation after the final service.
For an unknown stream length $n$, we seek
\begin{equation}
\min_{\pi\in\Pi_{\mathrm{online}}}\mathbb{E}[K_n^\pi]
\quad\text{subject to}\quad
K_t^\pi\leq(1+\epsilon)A_t,\quad t=1,\ldots,n.
\label{eq:objective}
\end{equation}
The constraint holds for every history allowed by the cost model, while the expectation uses the arrival and outcome model evaluated in the experiments.
The set $\Pi_{\mathrm{online}}$ contains policies that serve each arrival through an agent or an available program, using only the current task, given parameters, and past observations.
We give the policy family labels, cost profiles, routing charges, and $q,h$.
The policy estimates future arrivals of each task family and the probability of successful compilation online.
Looking up a program or removing its list entry costs no tokens.
Savings can fund compilation across families, but future tasks that have not arrived add no budget.
The margin is $\epsilon=0.25$ in the main evaluation.
This objective constrains cost, without imposing a task-success guarantee.

\section{Measurement}
\label{sec:measure}

We measure both the saving after a program exists and the cost of producing it.
In each attempt, we run the generated program on its source tasks and check whether it completes them.
We call this check \emph{verification}.
We record the costs of successful and failed attempts, and compare program and agent execution on matched fresh inputs.
These measurements provide the online cost profiles and show why low program-use cost alone does not establish payback.

\begin{figure}[H]
\vspace{-5pt}%
\centering
\includegraphics[width=\linewidth]{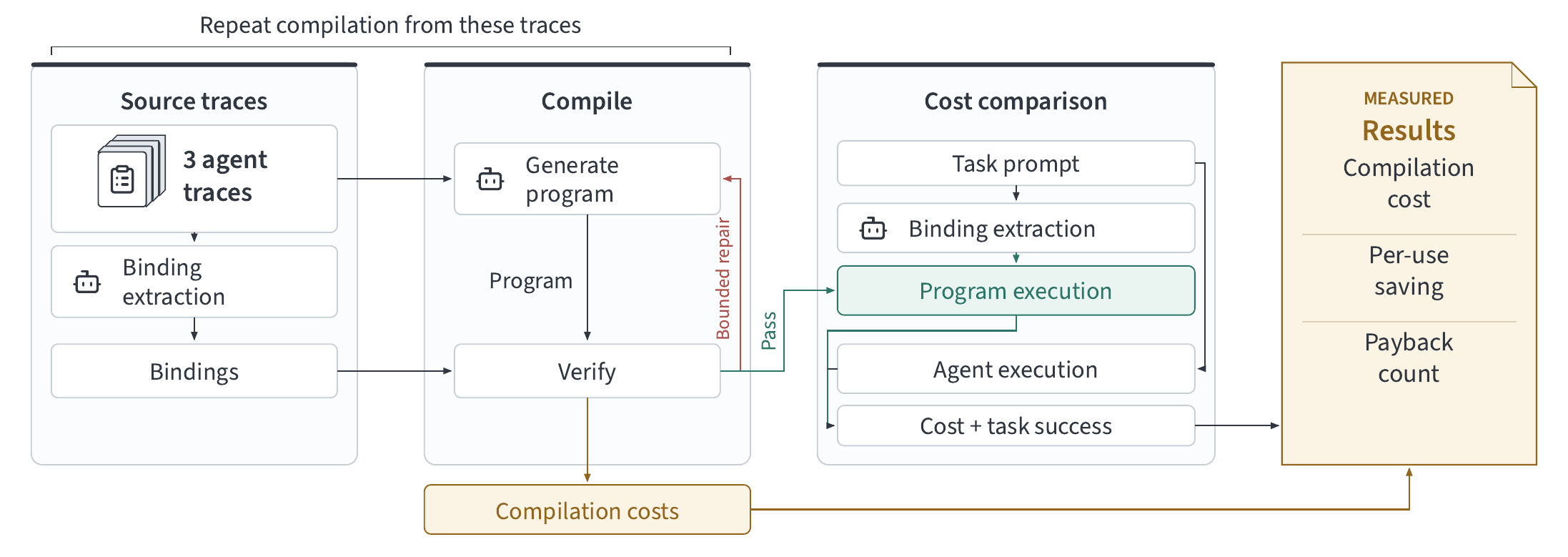}
\caption{PACE's measurement protocol.
The source traces provide compilation experience, and their task prompts provide the model's inputs for binding extraction during verification.
Fresh task prompts are used to compare program and agent execution.
The result panel combines execution measurements with the costs of successful and failed compilation attempts.}
\label{fig:workflow}
\vspace{-7pt}%
\end{figure}

\looseness=-1
We measure compilation and serving costs on seven families: four AndroidWorld tasks \citep{rawles2025androidworld}, two custom LibreOffice tasks in an OSWorld-based container \citep{xie2024osworld}, and a comment-posting task on WebArena's Reddit application \citep{zhou2024webarena}.
The models are GLM 5.3 Flash, DeepSeek Flash, and Qwen 3.8 Flash, with exact recorded identifiers in the Appendix.
Following AutoRPA's translation and construction stages \citep{chen2026autorpa}, we compile $k=3$ completed agent traces and permit bounded repair.
We assume access to task resets and success checks for these source tasks.
For verification, the model extracts bindings from the three source task prompts.
We cache these bindings within the attempt, restore each task's initial state, and require all three tasks to succeed before deployment.
We pass the extracted bindings to the program and judge its results against the source task goals.
This check measures reproduction of the source tasks.
Thirty fresh deployment tasks provide a separate evaluation, including model-based binding extraction.
We subtract the measured no-task harness cost from agent runs, and keep every recorded compilation and repair charge.
The reported costs exclude calls without saved records.
Appendix~\ref{app:measurement-details} describes the verification steps, cost calculations, repeated compilations, and which records are available.

\subsection{Serving cost}
\label{sec:share}

For a program, setting $h=0$ gives the expected saving per execution
\begin{equation}
s=(1-q)c-c^{\mathrm{prog}},\qquad\text{saving share}=s/c.
\label{eq:saving}
\end{equation}
\looseness=-1
The saving calculation includes extraction cost and one mean-cost fallback per failed program run.
Eleven initial programs pass verification.
Their extraction costs range from $331$ to $724$ tokens and their estimated saving shares from $0.936$ to $0.999$.
Appendix~\ref{app:serving-profiles} reports the profiles used in the experiments.

\label{sec:paired}
\looseness=-1
We use agent replays on the thirty deployment bindings of each of the six verified Android programs, giving $180$ matched-input agent runs (Table~\ref{tab:paired-replays} in Appendix~\ref{app:serving-profiles}).
Their saving shares range from $0.952$ to $0.999$, with $157$ successful agent runs and $178$ successful program runs out of $180$.
E.g., the GLM Calendar program passes all thirty bindings, while the replayed agent passes fourteen.
The replay uses a different emulator profile from the program runs and estimates fallback cost as $qc$; it does not measure an agent resumed after each failed program run.

\subsection{Compilation cost}
\label{sec:compile-cost}

Table~\ref{tab:compile-cost} reports the initial attempt charges and verification on the three source tasks.
Using the recorded successful charge as $C$, the payback count excluding source runs is $C/s$.
Adding $k$ mean-cost source runs gives $(C+kc)/s$.
These ratios exclude earlier failed attempts, extra runs required to collect traces, and replacement after drift.
We report cost ratios without rounding up to whole uses.
Rounding up gives the number of uses required to cover the recorded charge in expectation.
Programs from eleven of the twenty cells with compilation-stage records pass verification.
The other nine attempts fail, including Web DeepSeek's generation failure with an incomplete charge record.
Qwen's refused web request remains outside these twenty cells.

\begin{table}[htbp]
\vspace{-10pt}%
\centering
\caption{Compilation costs, verification outcomes, and conditional payback.
The two payback columns report $C/s$ and $(C+kc)/s$, with $k=3$ agent runs here.
Payback excludes earlier failed attempts and future replacement.
Blue, amber, and green identify GLM, DeepSeek, and Qwen, respectively.
Verification entries report passed/failed/untested task counts in that order; G marks a generation failure, and $\geq$ marks a lower-bound cost.}
\label{tab:compile-cost}
\tablefont
\setlength{\tabcolsep}{1.8pt}
\renewcommand{\arraystretch}{0.95}
\begin{tabular}{@{}lrrrrr@{}}
\toprule
\multicolumn{1}{c}{Family} & \multicolumn{1}{c}{\shortstack{Attempt cost\\(K tokens)}} & \multicolumn{1}{c}{\shortstack{Before repair\\(3/3 required)}} & \multicolumn{1}{c}{\shortstack{After repair\\(3/3 required)}} & \multicolumn{1}{c}{\shortstack{Compilation-only\\payback (uses)}} & \multicolumn{1}{c}{\shortstack{Trace-inclusive\\payback (uses)}} \\
\midrule
Contacts & \modelcell{3.2em}{137}{605}{138} & \modelcell{2.4em}{0/1/2}{0/1/2}{3/0/0} & \modelcell{2.4em}{3/0/0}{0/1/2}{3/0/0} & \modelcell{2.8em}{1.39}{--}{3.7} & \modelcell{2.8em}{4.4}{--}{6.85} \\
Markor delete & \modelcell{3.2em}{144}{156}{212} & \modelcell{2.4em}{0/1/2}{0/1/2}{3/0/0} & \modelcell{2.4em}{3/0/0}{3/0/0}{3/0/0} & \modelcell{2.8em}{4.31}{15.5}{10} & \modelcell{2.8em}{7.34}{18.7}{13.1} \\
Calendar & \modelcell{3.2em}{3424}{2872}{1936} & \modelcell{2.4em}{0/1/2}{0/1/2}{0/1/2} & \modelcell{2.4em}{3/0/0}{0/1/2}{0/1/2} & \modelcell{2.8em}{10.9}{--}{--} & \modelcell{2.8em}{13.9}{--}{--} \\
OsmAnd marker & \modelcell{3.2em}{3082}{1701}{1552} & \modelcell{2.4em}{0/1/2}{0/1/2}{0/1/2} & \modelcell{2.4em}{2/1/0}{0/1/2}{0/1/2} & \modelcell{2.8em}{--}{--}{--} & \modelcell{2.8em}{--}{--}{--} \\
\addlinespace[2pt]
Calc table & \modelcell{3.2em}{2544}{3057}{3054} & \modelcell{2.4em}{0/1/2}{0/1/2}{0/1/2} & \modelcell{2.4em}{0/1/2}{3/0/0}{0/1/2} & \modelcell{2.8em}{--}{4.73}{--} & \modelcell{2.8em}{--}{7.74}{--} \\
Writer memo & \modelcell{3.2em}{473}{259}{487} & \modelcell{2.4em}{0/1/2}{0/1/2}{3/0/0} & \modelcell{2.4em}{3/0/0}{3/0/0}{3/0/0} & \modelcell{2.8em}{6.8}{5.18}{13.4} & \modelcell{2.8em}{9.82}{8.22}{16.5} \\
\addlinespace[2pt]
Reddit comment & \modelcell{3.2em}{437}{$\geq$1397}{--} & \modelcell{2.4em}{0/1/2}{G}{--} & \modelcell{2.4em}{3/0/0}{G}{--} & \modelcell{2.8em}{3.55}{--}{--} & \modelcell{2.8em}{6.57}{--}{--} \\
\bottomrule
\end{tabular}
\vspace{-8pt}%
\end{table}

Among DeepSeek attempts with complete charge records, failed attempts have median cost $1701$K tokens, compared with $259$K for successful attempts, a $6.57$-fold difference across families.
Qwen's Calc attempt costs $3054$K tokens and still fails the first source task after three refinements.
This comparison does not identify the within-family cost effect of failure.
After successful compilation, the payback count excluding source runs ranges from $1.389$ to $15.488$ uses across these programs.
Thus low execution cost can still require many reuses to pay back compilation.

\section{Online Algorithm}
\label{sec:policy}

\looseness=-1
PACE's online compilation algorithm uses observed task arrivals and compilation outcomes to make cost-aware compilation proposals, with a separate cumulative budget to limit spending.
The proposal can be wrong about future arrivals or verification, while the budget uses only arrivals already observed and costs already charged.
The algorithm serves the current task before considering compilation for later tasks.

\begin{figure}[tbp]
\vspace{-5pt}%
\centering
\includegraphics[width=\linewidth]{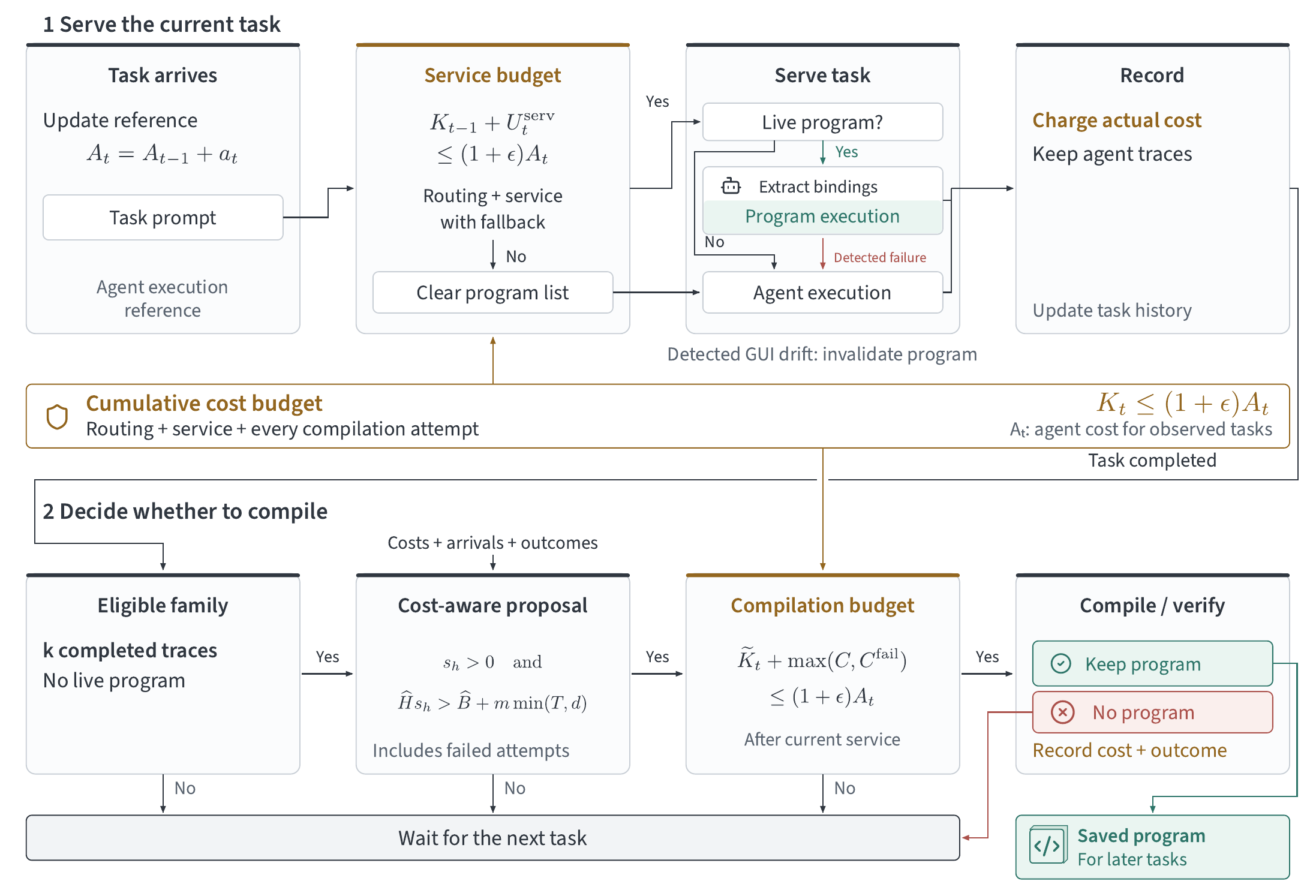}
\caption{\looseness=-1 PACE's online compilation algorithm.
After serving a task, PACE checks trace availability, estimated savings, and the cost budget before compiling.
A failed check leads to waiting, while successful verification adds a program for later tasks.
The two budget checks use the same cumulative cost budget.}
\label{fig:online-algorithm}
\vspace{-6pt}%
\end{figure}

\subsection{Cost-aware compilation proposal}
\label{sec:fullpolicy}

A family has enough traces for compilation after $k$ completed agent runs, including fallbacks.
Let $N$ be the number of arrivals observed for this family.
Let $d=t-t_{\mathrm{first}}$, where $t_{\mathrm{first}}$ is the index of its first arrival in the full task stream.
Thus $d=0$ on the family's first arrival.
After $j$ compilation attempts and $g$ successful verifications, define
\begin{equation}
\hat H=\min\!\left(\frac{Nd}{d+5},\frac1h\right),\qquad
\hat p=\frac{g+1}{j+2},\qquad
\hat B=C+\left(\frac1{\hat p}-1\right)C^{\mathrm{fail}}.
\label{eq:estimated-compile-cost}
\end{equation}
\looseness=-1
We use $1/0=\infty$.
This count estimates future reuse, with the mean program lifetime defined by $h$ as its upper limit.
We compute it from observed arrivals without training a prediction model.
The estimated compilation cost $\hat B$ includes failed attempts before verification succeeds, under the independent-retry interpretation derived in Appendix~\ref{app:reference}.
The cost guarantee does not require an accurate reuse estimate or independent retry outcomes.

With drift-adjusted saving $s_h=c-c^{\mathrm{prog}}-[h+(1-h)q]c$, a family with enough traces and no program to reuse proposes compilation when
\begin{equation}
s_h>0\quad\text{and}\quad\hat Hs_h>\hat B+m\min(T,d).
\label{eq:trigger}
\end{equation}
\looseness=-1
We call this condition the savings test.
The final term estimates the routing cost while one program remains on the list.
PACE removes entries after $T$ arrivals without use but keeps the programs.
It can add a program to the list again unless GUI drift has made it stop working.
The proposal accounts for the two sources of uncertain payback separately.
Low verification probability increases the estimated number of paid failures before producing a program, whereas GUI drift reduces the estimated reuse count after verification.
An expensive failed attempt can therefore make compilation unattractive even when successful construction is cheap.
The following budget checks limit the cost when these estimates are wrong.
Appendix~\ref{sec:matched-budget} compares alternative proposals with the same budget.
Their similar costs do not identify the exact estimators in Equation~\ref{eq:estimated-compile-cost} as a separate contribution.

\subsection{Budget checks}
\label{sec:budget-checks}

Before routing, PACE removes idle entries and adds any saved program that can still be reused for the arriving family to the list.
The next budget check decides whether this list can be used.
The program list shows the model which programs are available for routing.
This routing cost is paid on every arrival, including tasks served by the agent.
Keeping a program costs no tokens, but leaving it on the list adds routing cost on later tasks.
For a list of $\ell_t$ programs, PACE reserves enough budget for the largest possible service charge:
\begin{equation}
U_t^{\mathrm{serv}}=\tau_0+m\ell_t+
\begin{cases}
c,&\text{agent service},\\
c^{\mathrm{prog}}+c,&\text{program service}.
\end{cases}
\label{eq:service-reserve}
\end{equation}
The program reservation includes a possible fallback.
Service proceeds only if $K_{t-1}+U_t^{\mathrm{serv}}\leq(1+\epsilon)A_t$.
Otherwise PACE clears the program list before routing and uses the agent, whose charge is at most $a_t$.
Clearing the list keeps the programs and does not require any future spending.

Let $\widetilde K_t$ be the cumulative charge after service.
A proposed compilation proceeds only if
\begin{equation}
\widetilde K_t+\max(C,C^{\mathrm{fail}})\leq(1+\epsilon)A_t.
\label{eq:compile-budget}
\end{equation}
PACE then adds the actual compilation charge to its cumulative cost, including failed attempts and any compilation after the last service.
Savings can fund compilation across families, but estimated future reuse adds no budget.
Appendix~\ref{app:protocol-details} gives the complete execution order.

\subsection{Cost guarantee}
\label{sec:guarantee}

\begin{theorem}[Cost bound relative to agent execution]
\label{thm:cost-bound}
Let the reference increments $a_t$ be nonnegative.
Suppose every proposed action has a known finite charge bound, and clearing the program list before routing costs no tokens, requires no future spending, and allows default service at cost at most $a_t$.
For any $\epsilon\geq0$, PACE with exact arithmetic and valid reservations satisfies
\begin{equation}
K_t\leq(1+\epsilon)A_t
\quad\text{after each arrival }t
\label{eq:cost-guarantee}
\end{equation}
on every history allowed by these charge bounds.
\end{theorem}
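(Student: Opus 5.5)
The proof is an induction over arrivals that tracks the cumulative charge through the phases of one arrival: pre-routing list maintenance, service, and post-service compilation. The invariant is $K_t\le(1+\epsilon)A_t$ after all charges attributed to arrival $t$ have been added, including any compilation triggered after service. The base case is $K_0=0=A_0$. For the inductive step, assume $K_{t-1}\le(1+\epsilon)A_{t-1}$. Since $a_t\ge 0$ and $\epsilon\ge0$, we have $(1+\epsilon)A_{t-1}\le(1+\epsilon)A_t$. The budget available at arrival $t$ is therefore never smaller than at arrival $t-1$.

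First I handle the pre-routing and service phases. Removing idle entries, re-adding saved programs, and looking up programs cost no tokens by the cost model, so the charge entering routing is still $K_{t-1}$. There are two cases.

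\begin{enumerate}
\item \emph{The service check $K_{t-1}+U_t^{\mathrm{serv}}\le(1+\epsilon)A_t$ passes.} By the validity assumption, $U_t^{\mathrm{serv}}$ in Equation~\ref{eq:service-reserve} bounds the realized routing, service, and fallback charge on every allowed history. This holds because the program branch reserves $c^{\mathrm{prog}}+c$, which covers one fallback. Hence $\widetilde K_t\le(1+\epsilon)A_t$.
\item \emph{The check fails.} PACE clears the list at zero cost and serves with the agent at charge at most $a_t$. Then $\widetilde K_t\le K_{t-1}+a_t\le(1+\epsilon)A_{t-1}+a_t\le(1+\epsilon)A_t$, where the last step again uses $\epsilon a_t\ge0$.
\end{enumerate}

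The hypothesis that clearing requires no future spending ensures that this fallback leaves no obligation charged at later arrivals outside the induction.

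Next comes the compilation phase. If no compilation is proposed, or the check in Equation~\ref{eq:compile-budget} fails, nothing is charged and $K_t=\widetilde K_t$. If the check passes, the realized charge is either $C$ or $C^{\mathrm{fail}}$, each at most $\max(C,C^{\mathrm{fail}})$ by the known-finite-charge-bound assumption. Therefore $K_t\le\widetilde K_t+\max(C,C^{\mathrm{fail}})\le(1+\epsilon)A_t$. Compilation after the final service is covered by the same step, since it is attributed to the last arrival $n$.

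If several compilations can be proposed after one arrival, for different families, I would apply the check sequentially. Each check uses the running charge, which gives an inner induction over proposals within the arrival. This inner induction is what Appendix~\ref{app:protocol-details}'s execution order should guarantee.

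The main obstacle is bookkeeping rather than inequalities. I must make sure every charge is attributed to some checked step, so that no unreserved charge escapes:
\begin{itemize}
\item routing on agent-served arrivals, which is covered by $\tau_0+m\ell_t$ inside $U_t^{\mathrm{serv}}$;
\item fallbacks, including those caused by drift;
\item repair inside an attempt, which is included in the attempt charge bound.
\end{itemize}
I also need to confirm that the list length $\ell_t$ used in the reservation equals the list actually shown at routing. This is where the hypotheses of exact arithmetic and valid reservations are invoked. The statement is worst-case over histories, so no probabilistic estimate ($\hat H,\hat p,\hat B$) enters the argument: the savings test only selects among actions that the budget checks have already made safe.
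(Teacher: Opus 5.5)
Your proposal is correct and follows the paper's proof. Both argue by induction from $K_0=A_0=0$. The service check either passes, so the realized cost stays within $U_t^{\mathrm{serv}}$, or fails, so default service adds at most $a_t$ and $\epsilon a_t\ge 0$ closes the step. The compilation check then covers either outcome via $\max(C,C^{\mathrm{fail}})$. Your sequential inner induction for several proposals per arrival is harmless but not needed, since Algorithm~\ref{alg:policy} considers only one compilation per arrival, for the arriving family.
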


\begin{proof}
The bound holds after both steps.
If service passes the budget check, its cost is at most the reserved amount.
If the check fails, default service adds at most $a_t$, so
\[
K_{t-1}+a_t\leq(1+\epsilon)A_{t-1}+a_t\leq(1+\epsilon)A_t.
\]
The compilation check keeps the same bound for either outcome.
Starting from $K_0=A_0=0$ establishes the result by induction.
\end{proof}

\looseness=-1
The fixed-charge model satisfies the assumptions with $a_t=\tau_0+c$.
The implementation permits tolerance $10^{-8}\max\{1,(1+\epsilon)A_t\}$.
The bound is global over the stream and concerns cost, not task errors or per-family spending.
When actual costs vary, spending must stay within the reserved amounts, and default service must cost at most $a_t$.
Historical means alone do not ensure these conditions.
PACE satisfies the cost constraint in Equation~\ref{eq:objective}, but we do not claim that it minimizes expected cost.

\section{Online Evaluation}
\label{sec:eval}

We evaluate the savings available under PACE's cumulative budget, its comparison with published decision rules, and the contribution of each component.
We simulate task execution and compilation using measured cost profiles on synthetic streams and recorded arrival orders.

The evaluation uses two grids of $300$ conditions, each crossing three models, four verification settings, five arrival patterns, and five cost or environment settings, with eight paired runs per condition.
The second grid uses its own seeds and parameters.
We also use Sepsis and BPI 2019 process logs \citep{mannhardt2016sepsis,vandongen2019bpi} and two Wikipedia edit streams, giving twelve combinations of models and logs with ten paired runs.
The logs give task frequencies and arrival order, not live GUI tasks.
We shuffle measured cost profiles onto families and assign verification probabilities independently of initial compilation outcomes.
We estimate costs for successful or failed outcomes that were not observed and test how these estimates affect the results below.
The policies receive family labels, failure parameters, and an indication of detected GUI drift.
Appendix~\ref{app:simulation-details} specifies the grids, mappings, estimates for missing costs, and shared outcome sequences.

We compare PACE with ReAct \citep{yao2023react} and decision-rule adaptations of AutoRPA \citep{chen2026autorpa} and ToolPro \citep{liu2026toolpro} under the same compiler and cost model.
ReAct serves every arrival with the agent.
AutoRPA makes one compilation attempt when enough source traces are available and does not retry a rejected attempt.
ToolPro compares the compilation cost with the saving available from the current task, without projecting future reuse.
These adaptations differ only in when they call the same compiler.
All policies pay the same routing, extraction, fallback, and actual compilation charges, so differences in compiler quality do not affect this comparison of decision rules.
Each table ratio divides a policy's mean cost by PACE's mean cost in the same condition.
Overall results average these ratios with equal weights.
Reported percentage savings are the equally weighted means of $100(1-K_{\mathrm{PACE}}/K_{\mathrm{baseline}})$ across conditions, using each condition's mean costs.
We use $2000$ paired bootstrap resamples of runs, conditional on the tested conditions and given profiles.
We used these grids while developing PACE, so the comparisons are not independent confirmation.
The main tables give point estimates, with complete conditional intervals in Appendix~\ref{app:full-comparisons}.

\subsection{Savings}
\label{sec:savings}

Across the grids, ReAct costs $1.107$ and $1.130$ times PACE on average (Table~\ref{tab:policysim}).
Across $4800$ grid runs and $120$ log runs, every recorded cumulative cost ratio to the agent reference stays within the implemented $1.25$ bound.
This checks the implementation on those runs.
Theorem~\ref{thm:cost-bound} covers all histories allowed by the model.
AutoRPA's largest grid ratios to PACE are $19.314$ and $26.538$.
ToolPro matches ReAct cost in these conditions because its compilation rule counts savings from only one task.

\begin{table}[!htbp]
\vspace{-8pt}%
\centering
\caption{Cost ratios relative to PACE over two grids of 300 conditions each.
Mean is the mean of the ratios for individual conditions, and max is the largest tested condition mean.
Blue, amber, and green identify GLM, DeepSeek, and Qwen, respectively.
Bold marks the best displayed values.
Conditional 95\% intervals are in Appendix~\ref{app:full-comparisons}.}
\label{tab:policysim}
\tablefont
\setlength{\tabcolsep}{0.6pt}
\renewcommand{\arraystretch}{0.95}
\begin{tabular}{@{}lrrrr@{}}
\toprule
\multicolumn{1}{c}{Policy} & \multicolumn{1}{c}{\shortstack{Grid 1 mean\\($\times$ PACE) $\downarrow$}} & \multicolumn{1}{c}{\shortstack{Grid 1 max\\($\times$ PACE) $\downarrow$}} & \multicolumn{1}{c}{\shortstack{Grid 2 mean\\($\times$ PACE) $\downarrow$}} & \multicolumn{1}{c}{\shortstack{Grid 2 max\\($\times$ PACE) $\downarrow$}} \\
\midrule
ReAct & \modelcell{3.4em}{1.121}{1.144}{1.056} & \modelcell{3.4em}{2.796}{3.495}{2.281} & \modelcell{3.4em}{1.147}{1.174}{1.070} & \modelcell{3.4em}{3.692}{3.128}{2.742} \\
AutoRPA & \modelcell{3.4em}{2.107}{1.942}{2.715} & \modelcell{3.4em}{11.997}{9.660}{19.314} & \modelcell{3.4em}{2.168}{2.006}{2.805} & \modelcell{3.4em}{16.034}{12.927}{26.538} \\
ToolPro & \modelcell{3.4em}{1.121}{1.144}{1.056} & \modelcell{3.4em}{2.796}{3.495}{2.281} & \modelcell{3.4em}{1.147}{1.174}{1.070} & \modelcell{3.4em}{3.692}{3.128}{2.742} \\
\midrule
\textbf{PACE} & \modelcell{3.4em}{\textbf{1.000}}{\textbf{1.000}}{\textbf{1.000}} & \modelcell{3.4em}{\textbf{1.000}}{\textbf{1.000}}{\textbf{1.000}} & \modelcell{3.4em}{\textbf{1.000}}{\textbf{1.000}}{\textbf{1.000}} & \modelcell{3.4em}{\textbf{1.000}}{\textbf{1.000}}{\textbf{1.000}} \\
\bottomrule
\end{tabular}
\vspace{-6pt}%
\end{table}

\begin{table}[!htbp]
\vspace{-4pt}%
\centering
\caption{Cost ratios relative to PACE on four recorded arrival sequences, using ten paired runs.
Blue, amber, and green identify GLM, DeepSeek, and Qwen, respectively.
Bold marks the best displayed values.
Conditional 95\% intervals are in Appendix~\ref{app:full-comparisons}.}
\label{tab:realstreams}
\tablefont
\setlength{\tabcolsep}{0.6pt}
\renewcommand{\arraystretch}{0.95}
\begin{tabular}{@{}lrrrr@{}}
\toprule
\multicolumn{1}{c}{Policy} & \multicolumn{1}{c}{\shortstack{Sepsis\\($\times$ PACE) $\downarrow$}} & \multicolumn{1}{c}{\shortstack{BPI 2019\\($\times$ PACE) $\downarrow$}} & \multicolumn{1}{c}{\shortstack{Wiki tools\\($\times$ PACE) $\downarrow$}} & \multicolumn{1}{c}{\shortstack{Wiki humans\\($\times$ PACE) $\downarrow$}} \\
\midrule
ReAct & \modelcell{3.4em}{\textbf{0.971}}{\textbf{0.998}}{\textbf{0.989}} & \modelcell{3.4em}{1.531}{1.603}{1.393} & \modelcell{3.4em}{1.730}{1.480}{1.288} & \modelcell{3.4em}{1.056}{1.060}{1.040} \\
AutoRPA & \modelcell{3.4em}{1.145}{1.123}{1.246} & \modelcell{3.4em}{1.582}{1.641}{1.489} & \modelcell{3.4em}{1.754}{1.498}{1.331} & \modelcell{3.4em}{1.164}{1.144}{1.232} \\
ToolPro & \modelcell{3.4em}{\textbf{0.971}}{\textbf{0.998}}{\textbf{0.989}} & \modelcell{3.4em}{1.531}{1.603}{1.393} & \modelcell{3.4em}{1.730}{1.480}{1.288} & \modelcell{3.4em}{1.056}{1.060}{1.040} \\
\midrule
\textbf{PACE} & \modelcell{3.4em}{1.000}{1.000}{1.000} & \modelcell{3.4em}{\textbf{1.000}}{\textbf{1.000}}{\textbf{1.000}} & \modelcell{3.4em}{\textbf{1.000}}{\textbf{1.000}}{\textbf{1.000}} & \modelcell{3.4em}{\textbf{1.000}}{\textbf{1.000}}{\textbf{1.000}} \\
\bottomrule
\end{tabular}
\vspace{-8pt}%
\end{table}

\looseness=-1
On the recorded logs, PACE reduces token costs by $17.34\%$ relative to ReAct, with a conditional $95\%$ interval of $[14.96,19.61]\%$ (Table~\ref{tab:realstreams}).
It reduces cost in nine of twelve conditions.
Its largest cost ratio to ReAct is $1.030$, and its lowest is $0.578$.
Relative to PACE, AutoRPA and ToolPro have mean cost ratios of $1.362$ and $1.262$, respectively.
AutoRPA costs more in every condition, while ToolPro costs less in three.
The comparison concerns the stated decision-rule adaptations, rather than the original systems end to end.
Cost-aware proposals under the same budget have similar costs (Appendix~\ref{sec:matched-budget}).

\subsection{Component effects}
\label{sec:ablation}

\looseness=-1
Table~\ref{tab:ablation} reports cost ratios relative to full PACE.
\emph{No savings test} compiles whenever eligible while keeping the budget.
\emph{No cost budget} removes reservations and budget-triggered program list clearing.
\emph{Eager retry} removes both, and \emph{Equal attempt costs} uses $C/\hat p$ in place of $\hat B$ while keeping actual charges and reservations.

\begin{table}[htbp]
\centering
\caption{Cost ratios after removing or simplifying PACE components.
Values are mean cost ratios relative to PACE, which is exactly one.
Blue, amber, and green identify GLM, DeepSeek, and Qwen, respectively.
Paired intervals are in Appendix~\ref{app:ablation-absolute}.
Grid max is the largest tested condition-mean ratio, without a sampling interval.}
\label{tab:ablation}
\tablefont
\setlength{\tabcolsep}{0.9pt}
\renewcommand{\arraystretch}{0.95}
\begin{tabular}{@{}lrrrr@{}}
\toprule
\multicolumn{1}{c}{Variant} & \multicolumn{1}{c}{\shortstack{Grid 1 mean\\($\times$ PACE)}} & \multicolumn{1}{c}{\shortstack{Grid 2 mean\\($\times$ PACE)}} & \multicolumn{1}{c}{\shortstack{Logs mean\\($\times$ PACE)}} & \multicolumn{1}{c}{\shortstack{Grid max\\($\times$ PACE)}} \\
\midrule
\textbf{PACE} & \modelcell{3.4em}{1.000}{1.000}{1.000} & \modelcell{3.4em}{1.000}{1.000}{1.000} & \modelcell{3.4em}{1.000}{1.000}{1.000} & \modelcell{3.4em}{1.000}{1.000}{1.000} \\
No savings test & \modelcell{3.4em}{1.174}{1.186}{1.181} & \modelcell{3.4em}{1.140}{1.167}{1.121} & \modelcell{3.4em}{1.581}{1.548}{1.451} & \modelcell{3.4em}{1.957}{1.810}{1.901} \\
No cost budget & \modelcell{3.4em}{1.008}{1.014}{1.007} & \modelcell{3.4em}{1.015}{1.016}{1.010} & \modelcell{3.4em}{1.000}{1.000}{1.000} & \modelcell{3.4em}{1.371}{1.410}{1.301} \\
Equal attempt costs & \modelcell{3.4em}{1.100}{1.093}{1.119} & \modelcell{3.4em}{1.082}{1.095}{1.077} & \modelcell{3.4em}{1.227}{1.175}{1.297} & \modelcell{3.4em}{1.568}{1.567}{1.466} \\
Eager retry & \modelcell{3.4em}{10.317}{8.414}{18.329} & \modelcell{3.4em}{10.919}{9.352}{19.286} & \modelcell{3.4em}{3.303}{3.326}{4.922} & \modelcell{3.4em}{92.863}{73.996}{195.631} \\
\bottomrule
\end{tabular}
\end{table}

\looseness=-1
Removing the savings test raises mean cost on the recorded logs by $52.68\%$ (conditional $95\%$ interval $[48.25,58.36]\%$).
Using the same estimated cost for successful and failed attempts raises it by $23.34\%$ ($[19.18,28.06]\%$).
The equal-cost variant raises mean cost for all three models.
Both variants keep the budget, so their largest grid ratios to ReAct remain near the same $1.25$ limit despite higher mean costs.
Removing the budget changes mean cost on the recorded logs by only $0.0013\%$, yet raises the largest tested grid ratio to ReAct from $1.2468$ to $1.7563$, or $40.9\%$.
Thus the near-zero effect on the recorded logs concerns average cost, while the budget's measured contribution is protection against larger losses.
Appendix~\ref{app:ablation-absolute} gives the cost ratios and paired intervals, including conditions where an ablation is cheaper.

\paragraph{Cost assumptions.}
For missing costs, we vary the assumed ratio of failed to successful compilation costs over $0.5,1,2,5$.
This gives mean PACE-to-ReAct cost ratios from $0.833$ to $0.866$ on the recorded logs.
Assigning high compilation costs to frequent families with low verification probability raises the mean to $1.015$, about $1.5\%$ above ReAct, while preserving the cost constraint.
This assignment makes programs costly to produce for the families with the most reuse opportunities.
The resulting loss of savings shows why task frequency alone is not enough to decide when to compile.
The budget still limits extra spending, but it cannot create savings when the available compilation costs and useful lifetimes make reuse unprofitable.
Appendix~\ref{app:cost-sensitivity} reports all seventy-two sensitivity conditions.
Appendix~\ref{app:live} reports a separate live GUI component study.

\section{Related Work}
\label{sec:related}

\textbf{Computer-use agents.}
Synapse and Agent Workflow Memory reuse trajectories or extracted workflows to guide later GUI actions \citep{zheng2024synapse,wang2025awm}.
AutoRPA and PreAct compile interaction traces into programs, with verification, repair, or agent fallback \citep{chen2026autorpa,li2026preact}.
PACE measures successful and failed compilation costs, and decides when to compile under a cumulative token budget.

\looseness=-1
\textbf{Online decision theory.}
Classical rent-or-buy analysis weighs ongoing service against an upfront purchase \citep{karlin1994competitive}.
Prediction-guided algorithms improve purchase timing using estimated demand while limiting the effect of prediction errors \citep{purohit2018improving,gollapudi2019online}.
Conservative contextual bandits constrain cumulative reward relative to a baseline at every round \citep{kazerouni2016conservative}.
PACE applies this baseline constraint to token charges, reserving compilation, fallback, and routing costs under valid action-cost bounds.

\section{Limitations}
\label{sec:limitations}

The measurements cover seven parameterized task families and three model profiles.
The online simulations use given family labels, fixed cost profiles, and reported failure causes.
Their savings depend on how costs and compilation outcomes are assigned to arriving task families.
Historical mean costs do not provide the action bounds required for the cost guarantee.
The live study tests fallback and call reservations for a fixed program, without evaluating the full online compilation algorithm.
Silent program failures and recovery from partially changed interfaces require further evaluation.
We measure model-token cost and exclude latency, device use, and program-maintenance effort.
Model and provider updates can change the measured costs.

\section{Conclusion}
\label{sec:conclusion}

\looseness=-1
PACE measures successful and failed compilation costs and per-use savings, then uses these profiles to make cost-aware compilation proposals.
Execution and compilation checks use the budget from observed arrivals to enforce $K_t\leq(1+\epsilon)A_t$ under valid action-cost bounds.
For successful compilation attempts, estimated payback counts excluding source agent runs are $2$--$16$ uses.
In simulations using recorded task arrivals, PACE saves $17.3\%$ relative to ReAct, $24.9\%$ relative to the AutoRPA adaptation, and $17.3\%$ relative to the ToolPro adaptation ($\epsilon=0.25$).
Ablations separate the effects of the savings test, failed-attempt cost, and the cost budget.

\FloatBarrier
\label{main:text-end}
\providecommand{\codeurl}{https://anonymous.4open.science/r/computer-use}
\subsection*{AI use statement}
\label{sec:ai-use}
\looseness=-1
We used AI tools for finding relevant literature, drafting and revising text, and implementing the measurement and evaluation methods.
The tools also helped plan experiments, combine and interpret recorded results, develop and check mathematical arguments, and prepare tables and figures.
AI-generated programs are also part of the compilation method evaluated in Section~\ref{sec:measure}.
We have reviewed all AI-assisted work.
We take responsibility for the final content of this paper.

\subsection*{Ethics statement}
\label{sec:ethics}
The study uses benchmark applications, an Android emulator, desktop applications in a Linux container, and locally hosted web applications.
The arrival experiments use public event logs.
The study does not involve recruiting human participants or executing generated actions on users' personal accounts.
Our cost bound does not ensure task correctness, so applying the method to actions with important real-world effects would require checks appropriate to those actions.

\subsection*{Reproducibility statement}
\label{sec:reproducibility}
\looseness=-1
The code and saved records are at \expandafter\url\expandafter{\codeurl}.
Appendix~\ref{app:measurement-details} describes the task families, recorded model identifiers, cost calculations, and verification procedures.
Section~\ref{sec:guarantee} states and proves the cost bound, and Appendix~\ref{app:protocol-details} gives the complete execution order and reference calculations.
Appendix~\ref{app:simulation-details} specifies the experimental grids, arrival logs, estimates of missing costs, paired outcomes, and confidence interval calculation.
Appendix~\ref{app:live} describes the live GUI study and its charge records.
The released files include scripts for reading measurements, running experiments, and creating tables, along with paired results and lists of the inputs used.

\bibliography{references}
\bibliographystyle{iclr2027_conference}

\clearpage
\appendix
\raggedbottom
\crefalias{section}{appendix}
\crefalias{subsection}{appendix}
\section{Measurement details}\label{app:measurement-details}

\subsection{Task families}
\label{sec:families}

\looseness=-1
The study covers seven parameterized task families across Android, desktop, and web interfaces.
Four come from AndroidWorld \citep{rawles2025androidworld}: adding a contact, deleting a Markor note, adding a calendar event, and saving an OsmAnd marker.
The desktop tasks save a small Calc table or a Writer memo in a Linux container based on the OSWorld environment \citep{xie2024osworld}.
The web task posts a comment on WebArena's self-hosted Reddit application \citep{zhou2024webarena}.
The desktop and web tasks use custom parameterized procedures and task-specific state checks.
Task generators and checks are in \path{code/guiexp_android/android_env.py}, \path{code/guiexp_osworld/families.py}, and \path{code/guiexp_webarena/family.py} in the released files.
The released files list the result paths and hashes used in this paper.
These tasks let us measure costs on more interfaces, but they do not evaluate the full OSWorld or WebArena benchmarks.

\looseness=-1
The recorded model identifiers are \nolinkurl{z-ai/glm-5.3-flash}, \nolinkurl{deepseek/deepseek-v4-flash-vision-exp}, and \nolinkurl{qwen/qwen3.8-flash}.
The tables abbreviate these profiles as GLM, DeepSeek, and Qwen.
Each agent reads an unmarked screenshot and a numbered text tree at every step.
The tree is an accessibility tree on Android and desktop, and a DOM tree on the web.
The planned comparison has $21$ family--model cells.
Qwen's provider refuses the web request before producing a compilation record, leaving twenty cells for the source-task verification study.
Qwen Calc exhausts its repair budget without passing the source-task check and is recorded as a failed attempt.

\subsection{Cost accounting}\label{app:accounting}
\looseness=-1
Input, cached-input, and output tokens can have different prices.
For a call with $x$ input tokens, including $x_{\mathrm{cache}}$ cached tokens, and $y$ output tokens, its token cost is $(x-x_{\mathrm{cache}})+r_c x_{\mathrm{cache}}+r_o y$.
Here $r_c$ and $r_o$ are the cached-input and output prices relative to the uncached-input price.
One unit corresponds to one uncached input token at that model's recorded reference price.
We use $(r_c,r_o)=(1/5,10/3)$ for GLM, $(7/220,3)$ for DeepSeek, and $(8/75,47/15)$ for Qwen.
The conversion values used for these measurements are saved with the released measurement records.
We keep these ratios fixed across provider routes and record actual monetary charges separately.
We sum these units over serving, compilation, and repair calls.
The cost calculation excludes wall-clock time, device resources, and human effort.
Dollar comparisons across models also require their uncached-input prices.

\looseness=-1
We subtract the measured no-task harness baseline from agent runs.
We do not subtract it from compilation or binding extraction.
The Android baselines are $5090$ tokens for GLM, $2290$ for DeepSeek, and about $1032$ for Qwen.
Most inputs in Qwen's baseline measurements are cache hits, so this value is smaller than its raw-token baseline.
The desktop and web calculations use their own recorded baseline measurements.

We calculate costs from saved call records.
For every deployment, we calculate extraction cost from each recorded call's input, cached-input, and output token counts using the fixed ratios above.
Compilation costs include recorded retries and repair calls.
Calls lost during interrupted desktop runs cannot be recovered from these records.
The Web DeepSeek attempt described below lacks the usage records for three early empty builder replies, so its saved compilation charge is a lower bound.
One Qwen Contacts repeated attempt also has two timed-out calls without usage records, so its recorded charge is a lower bound.
We convert recorded token counts from official Qwen calls using the same fixed ratios.
Their separate monetary records are in yuan and are not divided by a dollar-denominated input price.

\subsection{Compilation}
\label{sec:protocol}

Three completed agent traces provide the source experience for each initial compilation attempt.
A translator rewrites their actions, and a builder constructs the parameterized program, following AutoRPA's translation and construction stages \citep{chen2026autorpa}.
The model extracts one binding from each source task prompt using the program's parameter schema.
We permit one retry when the extracted output fails the JSON or type check and charge both calls when a retry occurs.
We cache the three extracted bindings within the compilation attempt and use them for every refined program with the same parameter interface.
If extraction fails, we reject the attempt.
The verifier restores each source task's initial state.
A program can be used on later tasks only if it completes all three source tasks.
We pass the extracted binding to the program and score the resulting state against the source task parameters.
The checkers inspect application records or saved file contents to check whether the program has completed the task.

The verifier stops each check at the first failed task and records subsequent tasks as untested.
The repair loop uses the failed execution to analyze the error, complete the task with the agent, and refine the program.
Each refinement is one round of this repair loop.
We allow at most three refinements for each source task, at most nine in total, and charge each refinement to the task that first failed the check.
After refinement, we check the three tasks again from the beginning and stop repair once all three pass.
Four of five verified desktop and web programs require repair, with Qwen Writer the exception.
When an isolated browser worker cannot keep the failed execution's interface state, the agent restarts the source task and we record the restart and its full charge.
Verification uses the source tasks, while the thirty deployment tasks provide a separate evaluation.

We execute each program run in a worker that the parent process can terminate after the platform's deadline: $1200$ seconds on Android and desktop, and $900$ seconds on the web.
This external deadline remains effective when generated code catches an exception inside its own retry loop.
We record program-run timeouts separately from task-check failures, provider interruptions, and runs stopped by the operator.
Each program run starts from its task's restored initial state.

Builder and refinement calls request temperature zero and use provider-default output and reasoning limits across task families.
The Writer attempt and some Qwen repeated attempts use the official Qwen API because of interruptions on OpenRouter.
The adapter maps the recorded identifier to \texttt{qwen3.8-flash} and preserves the requested generation settings.
The official API documents a temperature floor of $0.6$ in its default thinking mode, so the requested value does not establish an identical effective sampling temperature across routes.\footnote{\url{https://platform.qianwenai.com/docs/developer-guides/getting-started/latest-model}}
The records identify the route used for each call.
For the Web DeepSeek attempt, we retried through Fireworks after three default-route replies contained no program text.
All three recovery responses reached an observed limit of $131{,}072$ completion tokens without producing program text.
We count this as failure to generate a program within that budget.
This is the limit at which the responses stopped, not an output limit we set in our requests.
Table~\ref{tab:compile-cost} includes these recorded recovery charges in the failed attempt's lower bound.

The compilation cost includes translation, the builder call, verification-time binding extraction, and every saved model call used to analyze failures, continue an agent run for repair, or refine the program.
It excludes the source agent runs because they are accounted for on the agent path.
Executing the program and checking its result make no model calls after binding extraction.
An unsuccessful attempt keeps every recorded translation, construction, extraction, and repair charge.
When call usage is missing, we report a saved lower bound rather than a complete attempt cost.

\looseness=-1
After a program passes verification, we deploy it on thirty fresh bindings.
Both verification and deployment use model-extracted bindings, but deployment uses fresh task prompts and a new extraction on each task.
The deployment failure fraction therefore includes failures in the complete extraction-and-execution path.

\subsection{Serving profiles}
\label{app:serving-profiles}
The exploration runs are the agent runs on the three source tasks that produce the source traces, with retries after failed runs.
Table~\ref{tab:share} contains the exploration and deployment profiles of the initial programs used in the experiments, and Table~\ref{tab:paired-replays} contains the matched agent replays summarized in Section~\ref{sec:paired}.
The profile means come from the exploration runs on the three source tasks and differ from the replay means on the thirty deployment bindings.
\begin{table}[htbp]
\centering
\caption{Serving costs and estimated savings for the initial programs.
Both cost columns use thousands of the tokens defined in Section~\ref{sec:formulation}.
$c$ is the baseline-subtracted mean agent cost.
Thirty deployment runs provide the mean extraction cost $c^{\mathrm{prog}}$ and failure fraction $q$.
The saving share charges one mean-cost agent fallback for a failed execution.
Blue, amber, and green identify GLM, DeepSeek, and Qwen, respectively.
Appendix~\ref{app:measurement-uncertainty} reports confidence intervals for the deployment estimates.
A dash means an unavailable program-path value, and the Qwen web request was refused by the provider.}
\label{tab:share}
\tablefont
\setlength{\tabcolsep}{2.5pt}
\renewcommand{\arraystretch}{0.95}
\begin{tabular}{@{}lrrrr@{}}
\toprule
\multicolumn{1}{c}{Family} & \multicolumn{1}{c}{\shortstack{$c$\\(K tokens)}} & \multicolumn{1}{c}{\shortstack{$c^{\mathrm{prog}}$\\(K tokens)}} & \multicolumn{1}{c}{\shortstack{$q$\\(fraction)}} & \multicolumn{1}{c}{\shortstack{Saving share\\(fraction)}} \\
\midrule
Contacts & \modelcell{2.7em}{98.7}{--}{39.2} & \modelcell{2.7em}{0.425}{--}{0.552} & \modelcell{2.1em}{0.000}{--}{0.033} & \modelcell{2.1em}{0.996}{--}{0.953} \\
Markor delete & \modelcell{2.7em}{33.7}{10.8}{21.5} & \modelcell{2.7em}{0.332}{0.331}{0.388} & \modelcell{2.1em}{0.000}{0.033}{0.000} & \modelcell{2.1em}{0.990}{0.936}{0.982} \\
Calendar & \modelcell{2.7em}{314}{--}{--} & \modelcell{2.7em}{0.640}{--}{--} & \modelcell{2.1em}{0.000}{--}{--} & \modelcell{2.1em}{0.998}{--}{--} \\
OsmAnd marker & \modelcell{2.7em}{--}{--}{--} & \modelcell{2.7em}{--}{--}{--} & \modelcell{2.1em}{--}{--}{--} & \modelcell{2.1em}{--}{--}{--} \\
\addlinespace[2pt]
Calc table & \modelcell{2.7em}{--}{647}{--} & \modelcell{2.7em}{--}{0.677}{--} & \modelcell{2.1em}{--}{0.000}{--} & \modelcell{2.1em}{--}{0.999}{--} \\
Writer memo & \modelcell{2.7em}{70.2}{50.7}{37.0} & \modelcell{2.7em}{0.596}{0.692}{0.724} & \modelcell{2.1em}{0.000}{0.000}{0.000} & \modelcell{2.1em}{0.992}{0.986}{0.980} \\
\addlinespace[2pt]
Reddit comment & \modelcell{2.7em}{124}{--}{--} & \modelcell{2.7em}{0.716}{--}{--} & \modelcell{2.1em}{0.000}{--}{--} & \modelcell{2.1em}{0.994}{--}{--} \\
\bottomrule
\end{tabular}
\end{table}

\begin{table}[htbp]
\vspace{-10pt}%
\centering
\caption{Agent replay costs and success counts on thirty matched deployment bindings per populated entry.
$c$ is the mean baseline-subtracted cost in K tokens, with K=$10^3$.
Saving share uses the mean-cost fallback estimate $qc$.
Blue, amber, and green identify GLM, DeepSeek, and Qwen, respectively.
Appendix~\ref{app:measurement-uncertainty} reports the confidence intervals.}
\label{tab:paired-replays}
\tablefont
\setlength{\tabcolsep}{1.5pt}
\renewcommand{\arraystretch}{0.95}
\begin{tabular}{@{}lrrrr@{}}
\toprule
\multicolumn{1}{c}{Family} & \multicolumn{1}{c}{\shortstack{Mean $c$\\(K tokens)}} & \multicolumn{1}{c}{\shortstack{Agent successes\\(of 30)}} & \multicolumn{1}{c}{\shortstack{Program successes\\(of 30)}} & \multicolumn{1}{c}{\shortstack{Saving share\\(fraction)}} \\
\midrule
Contacts & \modelcell{3.4em}{67.1}{--}{42.2} & \modelcell{2.9em}{30/30}{--}{30/30} & \modelcell{2.9em}{30/30}{--}{29/30} & \modelcell{2.9em}{0.994}{--}{0.954} \\
Markor delete & \modelcell{3.4em}{35.4}{22.3}{19.3} & \modelcell{2.9em}{30/30}{23/30}{30/30} & \modelcell{2.9em}{30/30}{29/30}{30/30} & \modelcell{2.9em}{0.991}{0.952}{0.980} \\
Calendar & \modelcell{3.4em}{535.3}{--}{--} & \modelcell{2.9em}{14/30}{--}{--} & \modelcell{2.9em}{30/30}{--}{--} & \modelcell{2.9em}{0.999}{--}{--} \\
OsmAnd marker & \modelcell{3.4em}{--}{--}{--} & \modelcell{2.9em}{--}{--}{--} & \modelcell{2.9em}{--}{--}{--} & \modelcell{2.9em}{--}{--}{--} \\
\bottomrule
\end{tabular}
\vspace{-14pt}%
\end{table}

\subsection{Repeated compilation}\label{app:measurement-checks}

We repeat compilation to check whether the outcome of the initial attempt persists.
We include two additional builder draws for each of the four Android families on GLM and DeepSeek, and for Contacts, Markor, and Calendar on Qwen.
These draws reuse the same three source tasks and saved translations as the initial attempt.
Each additional program goes through the same verification and repair as the initial program.
This gives twenty-two additional attempts and three observations for each family--model combination tested.
Twelve additional attempts pass verification, six fail the program checks, and four exhaust the builder's three-response retry budget without producing a valid program.
The generation failures include empty or syntactically invalid outputs and are distinct from failed program executions.

\begin{table}[htbp]
\centering
\caption{\looseness=-1 Repeated compilation outcomes.
Each entry reports the initial attempt and two repeated attempts as P (passes verification), F (fails verification), or G (generation failure).
Verification requires all three source tasks to pass, and tasks left untested after the first failed task are not counted as failures.
Repeated attempts are not deployed.
The given and extracted columns report the checks of the initial program in Appendix~\ref{app:given-extracted}, which are not part of verification.
Blue, amber, and green identify GLM, DeepSeek, and Qwen, respectively.
The three compilation attempts share traces and translation, so no independent-trial confidence interval is assigned.}
\label{tab:verification-repeat}
\tablefont
\setlength{\tabcolsep}{2.5pt}
\renewcommand{\arraystretch}{0.95}
\begin{tabular}{@{}lrrr@{}}
\toprule
\multicolumn{1}{c}{Family} & \multicolumn{1}{c}{Verification outcomes} & \multicolumn{1}{c}{Given (of 5)} & \multicolumn{1}{c}{Extracted (of 5)} \\
\midrule
Contacts & \modelcell{3.2em}{P/P/P}{F/F/P}{P/G/P} & \modelcell{2.2em}{0/5}{--}{5/5} & \modelcell{2.2em}{0/5}{--}{5/5} \\
Markor delete & \modelcell{3.2em}{P/P/P}{P/P/P}{P/P/P} & \modelcell{2.2em}{5/5}{5/5}{5/5} & \modelcell{2.2em}{5/5}{5/5}{5/5} \\
Calendar & \modelcell{3.2em}{P/P/P}{F/G/F}{F/G/G} & \modelcell{2.2em}{3/5}{--}{--} & \modelcell{2.2em}{3/5}{--}{--} \\
OsmAnd marker & \modelcell{3.2em}{F/F/F}{F/F/F}{--} & \modelcell{2.2em}{--}{--}{--} & \modelcell{2.2em}{--}{--}{--} \\
Calc table & \modelcell{3.2em}{--}{--}{--} & \modelcell{2.2em}{--}{--}{--} & \modelcell{2.2em}{--}{--}{--} \\
Writer memo & \modelcell{3.2em}{--}{--}{--} & \modelcell{2.2em}{--}{--}{--} & \modelcell{2.2em}{--}{--}{--} \\
Reddit comment & \modelcell{3.2em}{--}{--}{--} & \modelcell{2.2em}{--}{--}{--} & \modelcell{2.2em}{--}{--}{--} \\
\bottomrule
\end{tabular}
\end{table}

Markor passes all three attempts for each model, as do GLM Contacts and Calendar.
GLM and DeepSeek OsmAnd, DeepSeek Calendar, and Qwen Calendar produce no verified program in their three attempts.
DeepSeek Contacts fails initially and on its first repeat, then passes on its second repeat.
Qwen Contacts passes initially and on its second repeat, but its first repeat ends in a generation failure.
All verification runs use separate test environments with a matched Android emulator profile, app setup, and task resets.
Each environment has its own device state, and each cell has only three observations.
They do not establish independent retry outcomes or a stable within-family verification probability.
We report these initial and repeated attempts as counts, without a confidence interval for a fixed verification probability.

The call records also let us compare builder and repair charges between failed and successful attempts of one family.
Repeated builds reuse the initial translations, so this comparison excludes translation costs from both sides.
For DeepSeek Contacts, the failed initial attempt and first repeat cost $514.9$k and $888.6$k tokens before translation, while the successful second repeat costs $807.7$k.
The failed attempts therefore cost both less and more than the successful attempt under the same verification and accounting boundary.
They do not identify both conditional mean costs for the remaining families, whose missing-cost assumptions are tested in Section~\ref{app:cost-sensitivity}.

\subsection{Given and extracted parameters}
\label{app:given-extracted}
We check the six verified Android programs on five additional bindings, once with the benchmark's given parameter values and once with model-extracted values, to separate extraction errors from program errors.
Four programs pass all five tasks through both input paths.
GLM Calendar passes three of five, and GLM Contacts passes none, with either input path.
All extracted values pass the type check.
For every task that fails in this check, the extracted values match the given values and both input paths fail.

GLM Contacts completed all thirty deployment tasks but fails here on a rebuilt emulator instance.
The software keyboard stays visible after the program enters the name fields and hides the phone input field from the accessibility tree.
The program then selects the phone-type label field and enters the number there without raising an exception.
Dismissing the keyboard after each text entry lets the program complete both bindings tested in a follow-up check.

\subsection{Program failures under GUI changes}
\label{sec:fragility}
\looseness=-1
This test measures how often GUI changes make a verified program fail and whether the failures are detected.
We test verified programs in a clean environment and under nine GUI changes adapted from two robustness benchmarks \citep{lee2025bmoca,chen2026dgara}.
The test runs programs without model calls.
Among bindings that pass in the clean environment, it measures the fraction that fail under each GUI change.
The five configuration changes use font scales of $1.3$ and $0.85$, display density $0.85$ times the default, French locale, and dark theme.
The four interruptions are a notification, low battery, a permission dialog, and an update prompt implemented as a notification-based stand-in.
Device fingerprints check that each change is reverted before the next condition.
We match each changed-condition outcome to the same binding in the clean condition and exclude skipped changes.
Programs with no clean passes are excluded from these conditional rates.
Averaging across the remaining programs gives appearance-change failure fractions of $0$, $0$, and $0.100$ on GLM, DeepSeek, and Qwen, respectively.
Interruptions give corresponding fractions of $0.842$, $1.000$, and $0.833$.
GLM Contacts has no clean passes because of the keyboard state described in Section~\ref{app:given-extracted}, but its failures remain in the pooled detectability calculation.
Across all test failures, including failures in the clean condition, detectable execution errors account for about $58$, $87$, and $71$ percent on the three models.
The remaining failures reach an incorrect task state without an execution error, which limits error-triggered fallback.
These undetected failures show that the detected-failure assumption of the cost model in Section~\ref{sec:formulation} does not always hold.
The GUI change failure rates describe the selected changes and tested families.
They do not estimate how frequently a deployment encounters those changes, so the online evaluation treats the drift hazard as a given scenario parameter.

\subsection{Measurement confidence intervals}
\label{app:measurement-uncertainty}
Cost intervals resample the recorded binding indices, keeping the association between agent cost, extraction cost, and program outcome.
They condition on the recorded program and binding protocol.
Success and failure intervals use an independent Bernoulli working model.
Zero failures in thirty trials gives a two-sided $95\%$ interval of $[0,0.116]$, so a narrow saving-share bootstrap cannot rule out unseen failure modes.
The exploration runs, which retry failed source tasks, the single cost record of each initial compilation, and the three attempts of each repeated-compilation cell do not support independent-trial cost intervals.
The matched replays use a different emulator profile from the program runs and therefore do not isolate model randomness.
\begin{table}[H]
\centering
\caption{Confidence intervals for deployment estimates of the initial programs.
$c^{\mathrm{prog}}$ uses tokens and gives the mean followed by positive and negative offsets of its 95 percent binding-bootstrap interval.
Failures shows an observed count followed by the lower and upper endpoints of a 95 percent Clopper--Pearson interval under an independent Bernoulli working model.
Blue, amber, and green identify GLM, DeepSeek, and Qwen, respectively.
These intervals condition on the recorded program and task-binding protocol.
The exploration runs, which retry failed source tasks, and single compilation cost records do not support a cost interval based on independent trials.}
\label{tab:deployment-uncertainty}
\tablefont
\setlength{\tabcolsep}{4pt}
\renewcommand{\arraystretch}{0.95}
\begin{tabular}{@{}lrr@{}}
\toprule
\multicolumn{1}{c}{Family} & \multicolumn{1}{c}{\shortstack{$c^{\mathrm{prog}}$\\(tokens)}} & \multicolumn{1}{c}{\shortstack{Failures (of 30)\\95\% CI (fraction)}} \\
\midrule
Contacts & \modelcell{3.4em}{\estci{424.5}{+14.2}{-23.0}}{\statpoint{--}}{\estci{552.1}{+39.6}{-35.9}} & \modelcell{3.4em}{\countci{0/30}{0.000}{0.116}}{\statpoint{--}}{\countci{1/30}{0.001}{0.172}} \\
Markor delete & \modelcell{3.4em}{\estci{331.6}{+5.1}{-4.8}}{\estci{330.6}{+13.1}{-13.2}}{\estci{388.1}{+25.4}{-23.6}} & \modelcell{3.4em}{\countci{0/30}{0.000}{0.116}}{\countci{1/30}{0.001}{0.172}}{\countci{0/30}{0.000}{0.116}} \\
Calendar & \modelcell{3.4em}{\estci{640.0}{+42.8}{-47.9}}{\statpoint{--}}{\statpoint{--}} & \modelcell{3.4em}{\countci{0/30}{0.000}{0.116}}{\statpoint{--}}{\statpoint{--}} \\
OsmAnd marker & \modelcell{3.4em}{\statpoint{--}}{\statpoint{--}}{\statpoint{--}} & \modelcell{3.4em}{\statpoint{--}}{\statpoint{--}}{\statpoint{--}} \\
Calc table & \modelcell{3.4em}{\statpoint{--}}{\estci{676.6}{+12.9}{-13.1}}{\statpoint{--}} & \modelcell{3.4em}{\statpoint{--}}{\countci{0/30}{0.000}{0.116}}{\statpoint{--}} \\
Writer memo & \modelcell{3.4em}{\estci{596.3}{+52.9}{-46.3}}{\estci{692.0}{+42.5}{-42.8}}{\estci{724.1}{+55.7}{-50.6}} & \modelcell{3.4em}{\countci{0/30}{0.000}{0.116}}{\countci{0/30}{0.000}{0.116}}{\countci{0/30}{0.000}{0.116}} \\
Reddit comment & \modelcell{3.4em}{\estci{716.4}{+105.6}{-87.9}}{\statpoint{--}}{\statpoint{--}} & \modelcell{3.4em}{\countci{0/30}{0.000}{0.116}}{\statpoint{--}}{\statpoint{--}} \\
\bottomrule
\end{tabular}
\end{table}

\begin{table}[H]
\centering
\caption{Agent-cost estimates and confidence intervals for the matched replays in Table~\ref{tab:paired-replays}.
Each populated entry is the mean cost across thirty recorded bindings.
The offsets below the mean give its 95 percent binding-bootstrap interval in K tokens.
CV divides the sample standard deviation by the mean.
Blue, amber, and green identify GLM, DeepSeek, and Qwen, respectively.}
\label{tab:paired-cost-uncertainty}
\tablefont
\setlength{\tabcolsep}{4pt}
\renewcommand{\arraystretch}{0.95}
\begin{tabular}{@{}lrr@{}}
\toprule
\multicolumn{1}{c}{Family} & \multicolumn{1}{c}{\shortstack{Mean $c$\\(K tokens)}} & \multicolumn{1}{c}{\shortstack{Agent CV\\(ratio)}} \\
\midrule
Contacts & \modelcell{3.8em}{\estci{67.1}{+7.8}{-7.8}}{\statpoint{--}}{\estci{42.2}{+1.5}{-1.4}} & \modelcell{2.8em}{\statpoint{0.326}}{\statpoint{--}}{\statpoint{0.099}} \\
Markor delete & \modelcell{3.8em}{\estci{35.4}{+4.8}{-4.5}}{\estci{22.3}{+9.4}{-7.1}}{\estci{19.3}{+1.0}{-.8}} & \modelcell{2.8em}{\statpoint{0.373}}{\statpoint{1.042}}{\statpoint{0.128}} \\
Calendar & \modelcell{3.8em}{\estci{535.3}{+94.4}{-89.3}}{\statpoint{--}}{\statpoint{--}} & \modelcell{2.8em}{\statpoint{0.485}}{\statpoint{--}}{\statpoint{--}} \\
OsmAnd marker & \modelcell{3.8em}{\statpoint{--}}{\statpoint{--}}{\statpoint{--}} & \modelcell{2.8em}{\statpoint{--}}{\statpoint{--}}{\statpoint{--}} \\
\bottomrule
\end{tabular}
\end{table}

\begin{table}[H]
\centering
\caption{Confidence intervals for success rates and saving shares on the matched replay bindings.
Successes gives observed counts and 95 percent Clopper--Pearson endpoints under an independent Bernoulli working model.
Saving share gives the mean with positive and negative 95 percent bootstrap offsets, resampling matched agent costs, program charges, and failure indicators together.
Blue, amber, and green identify GLM, DeepSeek, and Qwen, respectively.
The saving bootstrap cannot reveal a failure mode absent from the sample.
An offset marked $<.001$ is positive but smaller than $0.001$.
For zero failures in thirty uses, the binomial failure interval is $[0,0.116]$ even when the empirical saving interval is narrow.}
\label{tab:paired-uncertainty}
\tablefont
\setlength{\tabcolsep}{1.2pt}
\renewcommand{\arraystretch}{0.95}
\begin{tabular}{@{}lrrr@{}}
\toprule
\multicolumn{1}{c}{Family} & \multicolumn{1}{c}{\shortstack{Agent successes (of 30)\\95\% CI (fraction)}} & \multicolumn{1}{c}{\shortstack{Program successes (of 30)\\95\% CI (fraction)}} & \multicolumn{1}{c}{\shortstack{Saving share\\(fraction)}} \\
\midrule
Contacts & \modelcell{3.4em}{\countci{30/30}{0.884}{1.000}}{\statpoint{--}}{\countci{30/30}{0.884}{1.000}} & \modelcell{3.4em}{\countci{30/30}{0.884}{1.000}}{\statpoint{--}}{\countci{29/30}{0.828}{0.999}} & \modelcell{3.4em}{\estci{0.994}{+{<}.001}{-{<}.001}}{\statpoint{--}}{\estci{0.954}{+.034}{-.067}} \\
Markor delete & \modelcell{3.4em}{\countci{30/30}{0.884}{1.000}}{\countci{23/30}{0.577}{0.901}}{\countci{30/30}{0.884}{1.000}} & \modelcell{3.4em}{\countci{30/30}{0.884}{1.000}}{\countci{29/30}{0.828}{0.999}}{\countci{30/30}{0.884}{1.000}} & \modelcell{3.4em}{\estci{0.991}{+.001}{-.001}}{\estci{0.952}{+.037}{-.071}}{\estci{0.980}{+.002}{-.002}} \\
Calendar & \modelcell{3.4em}{\countci{14/30}{0.283}{0.657}}{\statpoint{--}}{\statpoint{--}} & \modelcell{3.4em}{\countci{30/30}{0.884}{1.000}}{\statpoint{--}}{\statpoint{--}} & \modelcell{3.4em}{\estci{0.999}{+{<}.001}{-{<}.001}}{\statpoint{--}}{\statpoint{--}} \\
OsmAnd marker & \modelcell{3.4em}{\statpoint{--}}{\statpoint{--}}{\statpoint{--}} & \modelcell{3.4em}{\statpoint{--}}{\statpoint{--}}{\statpoint{--}} & \modelcell{3.4em}{\statpoint{--}}{\statpoint{--}}{\statpoint{--}} \\
\bottomrule
\end{tabular}
\end{table}

\begin{table}[H]
\centering
\caption{Checks of the verified programs on five additional bindings, with 95 percent Clopper--Pearson intervals.
Each cell shows the observed count and then the lower and upper endpoints under an independent Bernoulli working model.
Blue, amber, and green identify GLM, DeepSeek, and Qwen, respectively.
Five checks provide limited evidence about future binding reliability.
Verification and repair use their bindings adaptively and are reported as exact counts without test-set intervals.}
\label{tab:binding-check-uncertainty}
\tablefont
\setlength{\tabcolsep}{4pt}
\renewcommand{\arraystretch}{0.95}
\begin{tabular}{@{}lrr@{}}
\toprule
\multicolumn{1}{c}{Family} & \multicolumn{1}{c}{\shortstack{Given (of 5)\\95\% CI (fraction)}} & \multicolumn{1}{c}{\shortstack{Extracted (of 5)\\95\% CI (fraction)}} \\
\midrule
Contacts & \modelcell{3.4em}{\countci{0/5}{0.000}{0.522}}{\statpoint{--}}{\countci{5/5}{0.478}{1.000}} & \modelcell{3.4em}{\countci{0/5}{0.000}{0.522}}{\statpoint{--}}{\countci{5/5}{0.478}{1.000}} \\
Markor delete & \modelcell{3.4em}{\countci{5/5}{0.478}{1.000}}{\countci{5/5}{0.478}{1.000}}{\countci{5/5}{0.478}{1.000}} & \modelcell{3.4em}{\countci{5/5}{0.478}{1.000}}{\countci{5/5}{0.478}{1.000}}{\countci{5/5}{0.478}{1.000}} \\
Calendar & \modelcell{3.4em}{\countci{3/5}{0.147}{0.947}}{\statpoint{--}}{\statpoint{--}} & \modelcell{3.4em}{\countci{3/5}{0.147}{0.947}}{\statpoint{--}}{\statpoint{--}} \\
OsmAnd marker & \modelcell{3.4em}{\statpoint{--}}{\statpoint{--}}{\statpoint{--}} & \modelcell{3.4em}{\statpoint{--}}{\statpoint{--}}{\statpoint{--}} \\
\bottomrule
\end{tabular}
\end{table}

\section{Online algorithm details}\label{app:protocol-details}

\looseness=-1
Algorithm~\ref{alg:policy} states the execution order, including required source traces, program list control, fallback, and both compilation outcomes.
\begin{algorithm}[htbp]
\caption{PACE with extra-cost margin $\epsilon$.
Family identities, cost profiles, routing charges, and the probability of GUI drift are given.}
\label{alg:policy}
\begin{algorithmic}[1]
\State Initialize cumulative charges $K\gets0$ and agent reference $A\gets0$
\For{each arrival $t$}
\State Load the arriving family's costs and state
\State $A\gets A+\tau_0+c$
\State Remove idle list entries and add the family's saved program if it can still be reused before the budget check
\State Compute $U_t^{\mathrm{serv}}$ from Equation~\ref{eq:service-reserve}
\If{$K+U_t^{\mathrm{serv}}>(1+\epsilon)A$}
\State Clear the program list and select agent service
\EndIf
\State Pay routing and serve the task; on a detected program failure, pay one agent fallback
\State Add the routing and serving charges to $K$; log completed agent traces
\State Stop reusing the program after detected GUI drift and update the task arrival records
\If{at least $k$ agent runs are complete, there is no program to reuse, and Equation~\ref{eq:trigger} holds}
\If{$K+\max(C,C^{\mathrm{fail}})\leq(1+\epsilon)A$}
\State Compile a program and verify it on completed source tasks; increment $j$
\If{verification passes}
\State $K\gets K+C$; increment $g$; keep and list the program
\Else
\State $K\gets K+C^{\mathrm{fail}}$
\EndIf
\EndIf
\EndIf
\EndFor
\end{algorithmic}
\end{algorithm}

\subsection{Compilation spend}
\label{app:reference}
For independent attempts with fixed verification probability $p>0$, the number of failed attempts before the first successful verification has mean $(1-p)/p$.
The expected compilation spend is therefore
\begin{equation}
B(p)=C+\left(\frac1p-1\right)C^{\mathrm{fail}}.
\label{eq:compile-spend}
\end{equation}
Using the estimate from past outcomes $\hat p$ gives PACE's estimated compilation cost $\hat B$.
This calculation excludes agent service between attempts and assumes a fixed retry distribution, which the repeated GUI measurements do not confirm.
The cost guarantee instead counts each actual model charge and does not use this independence assumption.

\subsection{Program lifetime}
\label{sec:program-lifetime}
A replacement calculation explains the lifetime cap $1/h$.
Suppose a working program can be produced at fixed cost $B$.
On each execution, GUI drift independently makes it stop working with probability $h>0$.
A replacement cycle has expected length $1/h$, including the execution on which GUI drift occurs.
It pays $B$, extraction per execution, fallback with probability $q$ conditional on no GUI drift, and one fallback when GUI drift occurs.
Dividing expected cycle cost by expected length gives
\begin{equation}
c^{\mathrm{prog}}+(1-h)qc+h(B+c).
\label{eq:replacement}
\end{equation}
This is below agent cost $c$ exactly when $s_h/h>B$.
At $\hat H=1/h$ with no routing charge, the proposal has the same form, using estimated spend $\hat B$.
Smaller reuse estimates further limit the savings available to repay compilation.
This reference calculation compares two specified policies and does not establish optimal replacement under unknown parameters.

\subsection{Cost of reserving for failure}
\label{app:reservation-example}
Reserving the failed-attempt cost can delay a compilation until observed arrivals add enough budget.
E.g., suppose a proposal passes the savings test in Equation~\ref{eq:trigger}, but success costs $60$ and failure $120$ tokens.
With reference and charged costs both $300$, $\epsilon=0.25$ allows only $75$ additional tokens, so PACE waits to compile.
Two more agent-served arrivals costing $100$ each raise both totals to $500$.
The resulting $125$ allowance covers either outcome.
Only observed arrivals increase this allowance.

When the failed-attempt cost exceeds the allowance for the whole stream, the reservation prevents compilation entirely.
A single-family example has $300$ arrivals, $k=3$, agent cost $1$, successful compilation cost $2$, and possible failed cost $200$.
Set $c^{\mathrm{prog}}=q=h=\tau_0=m=0$.
The compiler succeeds, but the policy does not know that in advance.
With $\epsilon=0.25$, PACE's extra budget never reaches the $200$ reservation, so it uses the agent and pays $300$.
Eager retry pays $3$ for source runs and $2$ for compilation.
The example shows how reserving the failure branch can prevent a profitable compilation while staying within the cost bound relative to agent execution.

\section{Online evaluation details}\label{app:simulation-details}

\subsection{Cost profiles}
\label{app:cost-profiles}

We construct cost profiles for the online evaluation from measurements of seven families for GLM and DeepSeek and six families with compilation records for Qwen.
The refused Qwen web request provides no cost profile.
For a cell with a successful compilation, we use its agent cost, extraction cost, and successful compilation cost.
We use its deployment failure fraction as $q$ conditional on no GUI drift.
Each test setting defines $h$ separately.
A cell without a successful compilation uses medians from cells with successful compilations under the same model for the unobserved successful-program quantities.
The agent success probability is the fraction of completed exploration attempts that succeeded, including unsuccessful attempts in the denominator.

\looseness=-1
A complete failed compilation gives the failed-attempt cost for that cell.
For an unobserved failed outcome, we use the same model's median complete failed-attempt cost.
For Web DeepSeek, whose recorded charge is incomplete, we set this estimate to at least its recorded cost lower bound.
We exclude incomplete charges from the donor median.
These values define fixed charges in the experiments.
The estimates of missing costs are assumptions about unobserved outcomes, not additional measurements.

\subsection{Synthetic task streams}
We evaluate two grids of $300$ conditions each.
Each includes all combinations of three model profiles, four verification settings, five arrival patterns, and five cost or environment settings.
The first grid has $1200$ arrivals and $30$ families that appear repeatedly, except in its sparse pattern.
Its verification probabilities are $0.9$, $0.5$, $0$, or an independently assigned per-family mixture over $\{0,0.1,0.5,0.9,1\}$.
Its patterns are uniform family sampling, Zipf sampling, bursts, a change in the frequent family, and infrequent repeats.
The uniform pattern concerns family labels, not wall-clock Poisson arrivals.
The base setting uses $h=0.02$, an idle limit of $T=100$, and router cost $368+91\ell_t$.
We measure routing cost with program lists of up to $100$ entries and use the resulting intercept and rounded per-entry cost.
We use these constants as shared scenario parameters across model profiles.
The other settings multiply only failed-compilation cost by five, multiply both compilation costs by five, set $h=0.1$, or raise the per-entry routing charge to $1820$.

\looseness=-1
The second grid uses its own random seeds and parameters.
It has $1800$ arrivals and $40$ families that appear repeatedly, except in its sparse pattern, with verification probabilities $0.95$, $0.35$, $0$, or a mixture over $\{0,0.05,0.3,0.7,1\}$.
Its altered-cost settings multiply failed cost by three or both compilation costs by seven, and its drift and routing settings use $h=0.07$ and $m=910$.
We report every condition in both grids.
The probabilities are assigned independently of the profiles' initial verification outcomes, so an estimate for a missing value does not identify a family's verification probability.

\subsection{Recorded arrival orders}\label{app:arrival-orders}
We also use Sepsis and BPI 2019 process logs \citep{mannhardt2016sepsis,vandongen2019bpi} and two Wikipedia edit streams.
In each process log, an arrival is a completed case, ordered by its earliest event, and a family is its full ordered activity sequence.
Sepsis contains $1050$ arrivals and $846$ families, while BPI 2019 contains $251{,}734$ arrivals and $11{,}973$ families.
These activity-sequence labels are assigned after each case completes and do not test online family identification.

\looseness=-1
For Wikipedia, an arrival is an edit, and families group normalized edit-summary templates with bot and tool labels.
The tool stream contains $65{,}564$ arrivals and $2529$ families.
We use the first $100{,}000$ timestamp-ordered human edits, which contain $68{,}773$ families.
Each human edit with an empty or section-only summary gets its own family label.
The source window spans 3 September 2026 at 18:48:26 UTC to 5 September 2026 at 18:48:26 UTC, as recorded in \path{datasets/wiki-stream/structure_report.md}.
The exact summary normalization is in \path{datasets/wiki-stream/build_families.py}, and \path{code/t2sim/streams.py} defines sorting and stream selection.

\looseness=-1
The logs give task frequencies and arrival order rather than live GUI tasks.
For each run, measured cost profiles are shuffled onto log-family identifiers, and verification probabilities are assigned independently from the first grid's mixture.
Three models and four logs give twelve combinations of models and logs.
The runs use the same base drift and routing settings as the first grid.

\paragraph{Confidence intervals.}
Each grid condition has eight paired runs, and each combination of a model and a log has ten.
Compilation outcomes are indexed by family and attempt, while serving outcomes are indexed by family, arrival, and failure channel.
Rules in a run share these outcome sequences and the same cost mapping.
A cell's cost ratio is the policy's mean token cost divided by the PACE mean in that cell.
We compute the overall result by averaging the cell ratios with equal weights, without combining absolute costs across models.
The grid and log intervals use $2000$ paired bootstrap resamples, with run indices shared across policies and conditions in each study.
They describe variation in the saved mappings and outcomes, conditional on the tested scenarios and given cost profiles.
They do not account for measurement error in the given costs or the effects of choosing different estimates of missing costs.
The largest tested condition mean is a descriptive maximum and has no confidence interval.

We used both grids while developing PACE, so the reported comparisons are exploratory evidence rather than independent confirmation.
All grid and log conditions use detected failures and one agent fallback.
The experimental environment reports whether a failure is due to GUI drift.
The policy uses this report to stop program reuse.
We do not test an online drift detector.
When a program fails, the model uses the same agent outcome as ReAct for that arrival.
This rule ensures that task success in the model cannot be lower than ReAct's.
The cost bound itself makes no such quality claim.

\subsection{Comparison rules}\label{app:baselines}
\label{sec:baselines}

\looseness=-1
We compare PACE with ReAct, two decision strategies adapted from published systems, and alternative timing rules with and without the same cumulative budget.
All methods use the same measured compiler and serving profiles, so the comparison isolates their compilation and reuse decisions.
The named adaptations do not reproduce the original systems' models, code generators, or benchmark scores.

\paragraph{ReAct.}
The ReAct baseline uses agent execution on every arrival and never compiles or reuses programs \citep{yao2023react}.

\paragraph{AutoRPA (one build).}
AutoRPA constructs a program from source tasks, limits its repair attempts, and uses ReAct when construction fails \citep{chen2026autorpa}.
Our adaptation attempts one build after $k=3$ completed agent runs, with the measured compilation charge already including bounded repair.
A failed build leaves the family on the agent path.
If the build succeeds, the program serves later tasks with agent fallback, and detected GUI drift returns the family to agent service without another build.
This transfers the bounded construction-and-reuse policy to the shared compiler and arrival model.

\paragraph{ToolPro (cost adaptation).}
ToolPro compares construction latency with the latency saved within one task \citep{liu2026toolpro}.
We adapt this single-task criterion to token cost: a family with enough traces and no program to reuse requests compilation only when $s_h>\hat B$.
The estimate $\hat B$ uses the same observed verification outcomes as PACE.
This rule gives no credit for savings on multiple future tasks.
We keep the shared parameterized programs and serving order, so the comparison tests the cost criterion rather than ToolPro's original tool execution system.

\paragraph{Cost-aware timing controls.}
\looseness=-1
The remaining controls share a per-family budget that allows compilation only when $F+C^{\mathrm{fail}}\leq\max(0,S)+\max(0,\hat Hs_h)$, where $F$ is failed-compilation spend and $S$ is accumulated program-use saving.
The allowance uses estimated future savings and does not enforce PACE's cumulative bound.
\emph{Eager + allowance} requests a build when the family has enough traces and no program to reuse.
\emph{After 10 arrivals} waits for ten arrivals of the family.
\emph{After 10 successes} waits for ten successful agent runs since the last compilation attempt or detected GUI drift.
\emph{Savings threshold} waits until the expected savings lost by using the agent instead of reusing a program reach $\hat B$.
\emph{Estimate + allowance} uses PACE's savings test with this allowance in place of its cost budget.

\paragraph{The same budget with other proposals.}
\looseness=-1
The budgeted rows in Tables~\ref{tab:policysim-full} and~\ref{tab:realstreams-full} hold PACE's $0.25$ margin, service reservations, and program list controls fixed.
\emph{Eager + both checks}, \emph{Arrival-10 + both}, and \emph{Estimate + both} add that budget to the corresponding estimate-based allowance rules.
\emph{Count + budget} changes only PACE's estimated reuse count to $\min(N,1/h)$.
\emph{Optimistic + budget} keeps PACE's estimate $\hat H$ but uses the one-sided $90$ percent Clopper--Pearson upper value in place of $\hat p$, using it as an optimistic proposal estimate.
\emph{One initial try} adds a limit to that optimistic rule: an unsuccessful first build ends compilation for the family, while an initially successful family can later attempt replacement.
The budget guarantee does not require these proposal estimates to be accurate.

All methods require $k=3$ completed agent runs before compilation and serve the current arrival before attempting a build.
They share free program lookup, idle expiry, adding programs to the list again if they can still be reused, and the same failure outcomes.
PACE and the other budgeted rules check service and compilation costs, and can clear the program list before a routing charge.
PACE's main setting is $\epsilon=0.25$ throughout both comparison tables.
Section~\ref{sec:ablation} separately removes each of its main components.

\subsection{Cost-profile sensitivity}\label{app:cost-sensitivity}

The base profiles contain eleven observed successful compilation costs, eight complete failed-attempt costs, and one failed-attempt cost lower bound.
We vary the cost of the unobserved outcome by setting target ratios $r=C^{\mathrm{fail}}/C$ to $0.5$, $1$, $2$, or $5$, while keeping the cost of each observed outcome unchanged.
For a profile with an observed successful compilation, we vary $C^{\mathrm{fail}}$.
For a profile with a complete failed attempt, we vary $C$.
For Web DeepSeek, we keep the estimate of the missing $C$ and set the failed cost to the larger of its recorded lower bound and $rC$.
Its actual ratio therefore exceeds the target value in all four settings.
The other profile values and the base probability assignments remain fixed.
These four settings give forty-eight combinations of models and logs.

We add two dependence scenarios using the base costs.
Verification probability decreases from $0.9$ to $0.1$ as the profile's rank in $C/c$ increases.
One scenario keeps the base profile assignments.
The other assigns profiles with high $C/c$ to frequent families without changing which profiles are used or how many times each is assigned.
Only the code that creates the workload uses counts from the full stream.
The policy receives neither those counts nor the hidden verification probabilities.
We fixed these seventy-two conditions before evaluation and saved them all.
They test alternatives to the base assumptions rather than estimate how these factors vary together in an actual deployment.

We separately test the incomplete Web DeepSeek charge at its recorded lower bound and at twice the modeled estimate, keeping all other settings fixed.
We use the default bursty condition and the BPI 2019 condition, giving four additional conditions that are excluded from the main comparison averages.

\begin{table}[htbp]
\centering
\caption{Sensitivity to missing costs and assigned cost dependencies.
Each row uses twelve combinations of models and logs and ten paired runs.
The ReAct, PACE, and Equal attempt costs columns show mean cost ratios to PACE for each model.
The last column averages the Equal attempt costs-to-PACE ratios across all models.
Nonzero-width 95 percent paired bootstrap intervals appear as upper and lower offsets below the mean; exact zero-width intervals are omitted.
Blue, amber, and green identify GLM, DeepSeek, and Qwen, respectively.
An offset marked $<.001$ is positive but smaller than $0.001$.
The listed $r=C^{\mathrm{fail}}/C$ is a target ratio.
The incomplete DeepSeek Web charge sets a recorded lower bound on its failed-attempt cost, which can raise the actual ratio.
Both policies use the same $\epsilon=0.25$ budget.}
\label{tab:missing-cost-sensitivity}
\tablefont
\setlength{\tabcolsep}{2pt}
\renewcommand{\arraystretch}{0.95}
\begin{tabular}{@{}lrrrr@{}}
\toprule
\multicolumn{1}{c}{\shortstack{Scenario\\$r$: cost ratio}} & \multicolumn{1}{c}{\shortstack{ReAct\\($\times$ PACE)}} & \multicolumn{1}{c}{\shortstack{PACE\\($\times$ PACE)}} & \multicolumn{1}{c}{\shortstack{Equal attempt costs\\($\times$ PACE)}} & \multicolumn{1}{c}{\shortstack{Equal / PACE\\(all models)}} \\
\midrule
$r=0.5$ & \modelcell{3.4em}{\estci{1.280}{+.113}{-.072}}{\estci{1.226}{+.068}{-.060}}{\estci{1.066}{+.024}{-.017}} & \modelcell{3.4em}{\statpoint{1.000}}{\statpoint{1.000}}{\statpoint{1.000}} & \modelcell{3.4em}{\estci{0.998}{+.007}{-.005}}{\estci{1.012}{+.011}{-.009}}{\estci{1.027}{+.022}{-.015}} & \estci{1.012}{+.007}{-.006} \\
$r=1$ & \modelcell{3.4em}{\estci{1.322}{+.150}{-.090}}{\estci{1.241}{+.091}{-.076}}{\estci{1.131}{+.063}{-.040}} & \modelcell{3.4em}{\statpoint{1.000}}{\statpoint{1.000}}{\statpoint{1.000}} & \modelcell{3.4em}{\statpoint{1.000}}{\estci{1.005}{+.001}{-.001}}{\statpoint{1.000}} & \estci{1.002}{+{<}.001}{-{<}.001} \\
$r=2$ & \modelcell{3.4em}{\estci{1.334}{+.179}{-.098}}{\estci{1.241}{+.107}{-.085}}{\estci{1.177}{+.086}{-.056}} & \modelcell{3.4em}{\statpoint{1.000}}{\statpoint{1.000}}{\statpoint{1.000}} & \modelcell{3.4em}{\estci{1.030}{+.011}{-.009}}{\estci{1.032}{+.011}{-.011}}{\estci{0.999}{+.020}{-.022}} & \estci{1.020}{+.010}{-.009} \\
$r=5$ & \modelcell{3.4em}{\estci{1.261}{+.151}{-.085}}{\estci{1.210}{+.105}{-.079}}{\estci{1.191}{+.085}{-.060}} & \modelcell{3.4em}{\statpoint{1.000}}{\statpoint{1.000}}{\statpoint{1.000}} & \modelcell{3.4em}{\estci{1.106}{+.031}{-.027}}{\estci{1.174}{+.045}{-.038}}{\estci{1.143}{+.055}{-.050}} & \estci{1.141}{+.034}{-.030} \\
Cost-dependent $p$ & \modelcell{3.4em}{\estci{1.601}{+.142}{-.134}}{\estci{2.046}{+.153}{-.160}}{\estci{1.783}{+.163}{-.135}} & \modelcell{3.4em}{\statpoint{1.000}}{\statpoint{1.000}}{\statpoint{1.000}} & \modelcell{3.4em}{\estci{1.141}{+.056}{-.052}}{\estci{1.083}{+.022}{-.018}}{\estci{1.144}{+.093}{-.074}} & \estci{1.123}{+.038}{-.032} \\
Cost-dependent $p$ + costly frequent & \modelcell{3.4em}{\estci{0.958}{+.001}{-.001}}{\statpoint{1.000}}{\statpoint{1.000}} & \modelcell{3.4em}{\statpoint{1.000}}{\statpoint{1.000}}{\statpoint{1.000}} & \modelcell{3.4em}{\estci{0.989}{+{<}.001}{-{<}.001}}{\estci{1.100}{+.002}{-.002}}{\estci{1.078}{+.002}{-.002}} & \estci{1.056}{+.001}{-.002} \\
\bottomrule
\end{tabular}
\end{table}

Across the four missing-cost settings, PACE's mean cost ranges from $0.833$ to $0.866$ of ReAct.
The Equal attempt costs variant costs $1.020$ and $1.141$ times PACE on average at $r=2$ and $r=5$.
PACE costs more than this variant in six of twelve conditions at $r=0.5$ and in two at $r=2$, but less in all twelve at $r=5$.
At $r=1$, the eight GLM and Qwen conditions tie exactly.
For DeepSeek, the partial web spending record sets a minimum failed-attempt cost, so differences remain in the other four conditions.
The value of using separate costs therefore depends on the assumed difference between successful and failed compilation costs and how that difference changes compilation timing.

\looseness=-1
Under cost-dependent verification alone, PACE has a mean cost of $0.671$ times ReAct cost.
Adding the assignment of costly profiles to frequent families raises that ratio to $1.015$, so PACE costs about $1.5\%$ more than ReAct on average.
Equal attempt costs has a mean cost ratio to PACE of $1.056$ in the same joint scenario.
Both variants satisfy the implemented budget check, with a relative numerical tolerance of $10^{-8}$, in all $1440$ runs.
The largest recorded prefix ratio is $1.25000001$.
These results separate the cost constraint from the amount saved, which depends on the workload and given cost model.

In the separate Web DeepSeek missing-charge check, PACE-to-ReAct cost ratios range from $0.686$ to $0.785$ on the default bursty condition and from $0.545$ to $0.618$ on BPI 2019.
The corresponding ratios under the modeled failed cost are $0.734$ and $0.624$.
All four additional conditions satisfy the implemented budget.

\subsection{Alternative compilation proposals}
\label{sec:matched-budget}

Applying PACE's budget to the cost-aware timing controls preserves their low costs.
On the recorded logs, Eager + both checks, Arrival-10 + both, and Estimate + both have mean costs of $0.8349$, $0.8323$, and $0.8312$ times ReAct cost, compared with $0.8266$ for PACE.
Each paired $95$ percent interval for control cost divided by PACE cost lies above one, although each mean cost difference is below $1\%$.

Count + budget has mean cost ratios to ReAct of $0.9556$ and $0.9536$ on the two grids, compared with $0.9563$ and $0.9539$ for PACE.
On the recorded logs, its mean cost is $0.8265$ times ReAct cost.
Its paired $95\%$ interval for the cost ratio to PACE is $[0.9997,0.9999]$.
The count rule is slightly cheaper than PACE's estimate $\hat H$ in these conditions.
Optimistic + budget and One initial try have higher log means of $0.9203$ and $0.8839$.
Together, these results support cost-aware proposals within the shared cost budget, but do not identify the exact reuse-count estimate as a separate source of improvement.

\subsection{Complete cost comparisons}
\label{app:full-comparisons}
\looseness=-1
The following tables report the named adaptations and all other compilation rules, with conditional paired bootstrap intervals.
PACE's ratio is one in every run by normalization.
ToolPro matches ReAct in all conditions shown, so those rows have identical ratios and intervals relative to PACE.
We omit offsets only for zero-width intervals, not for small nonzero intervals.
Maxima are descriptive maxima over tested condition means and have no sampling interval.
\begin{table}[!htbp]
\centering
\caption{Cost across the two parameter grids, relative to PACE.
Each grid contains one hundred conditions per model and eight paired runs per condition.
Mean is the mean of the cost ratios for individual conditions, and Max is the largest tested condition mean.
Nonzero-width 95 percent paired bootstrap intervals appear as upper and lower offsets below the mean; exact zero-width intervals are omitted.
Intervals use 2000 shared bootstrap resamples of runs and condition on the tested scenarios and given profiles.
An offset marked $<.001$ is positive but smaller than $0.001$.
Blue, amber, and green identify GLM, DeepSeek, and Qwen, respectively.
AutoRPA and ToolPro are decision-rule adaptations defined in Appendix~\ref{app:baselines}.
Tables~\ref{tab:policysim} and~\ref{tab:realstreams} show the rows for ReAct, AutoRPA, ToolPro, and PACE.
PACE is exactly one by normalization; other fixed ratios also have zero-width intervals.
The descriptive maxima have no sampling interval.}
\label{tab:policysim-full}
\tablefont
\setlength{\tabcolsep}{0.6pt}
\renewcommand{\arraystretch}{0.95}
\begin{tabular}{@{}lrrrr@{}}
\toprule
\multicolumn{1}{c}{Policy} & \multicolumn{1}{c}{\shortstack{Grid 1 mean\\($\times$ PACE) $\downarrow$}} & \multicolumn{1}{c}{\shortstack{Grid 1 max\\($\times$ PACE) $\downarrow$}} & \multicolumn{1}{c}{\shortstack{Grid 2 mean\\($\times$ PACE) $\downarrow$}} & \multicolumn{1}{c}{\shortstack{Grid 2 max\\($\times$ PACE) $\downarrow$}} \\
\midrule
ReAct & \modelcell{3.4em}{\estci{1.121}{+.023}{-.016}}{\estci{1.144}{+.014}{-.015}}{\estci{1.056}{+.022}{-.016}} & \modelcell{3.4em}{2.796}{3.495}{2.281} & \modelcell{3.4em}{\estci{1.147}{+.029}{-.022}}{\estci{1.174}{+.024}{-.021}}{\estci{1.070}{+.018}{-.014}} & \modelcell{3.4em}{3.692}{3.128}{2.742} \\
AutoRPA & \modelcell{3.4em}{\estci{2.107}{+.022}{-.006}}{\estci{1.942}{+.033}{-.021}}{\estci{2.715}{+.051}{-.044}} & \modelcell{3.4em}{11.997}{9.660}{19.314} & \modelcell{3.4em}{\estci{2.168}{+.031}{-.015}}{\estci{2.006}{+.051}{-.030}}{\estci{2.805}{+.045}{-.034}} & \modelcell{3.4em}{16.034}{12.927}{26.538} \\
ToolPro & \modelcell{3.4em}{\estci{1.121}{+.023}{-.016}}{\estci{1.144}{+.014}{-.015}}{\estci{1.056}{+.022}{-.016}} & \modelcell{3.4em}{2.796}{3.495}{2.281} & \modelcell{3.4em}{\estci{1.147}{+.029}{-.022}}{\estci{1.174}{+.024}{-.021}}{\estci{1.070}{+.018}{-.014}} & \modelcell{3.4em}{3.692}{3.128}{2.742} \\
Eager + allowance & \modelcell{3.4em}{\estci{1.036}{+.005}{-.004}}{\estci{1.070}{+.004}{-.003}}{\estci{1.007}{+.002}{-.003}} & \modelcell{3.4em}{1.449}{1.492}{1.314} & \modelcell{3.4em}{\estci{1.033}{+.004}{-.003}}{\estci{1.093}{+.005}{-.003}}{\estci{1.007}{+.003}{-.003}} & \modelcell{3.4em}{1.455}{1.525}{1.321} \\
After 10 arrivals & \modelcell{3.4em}{\estci{1.032}{+.005}{-.004}}{\estci{1.032}{+.004}{-.004}}{\estci{1.007}{+.002}{-.003}} & \modelcell{3.4em}{1.449}{1.492}{1.314} & \modelcell{3.4em}{\estci{1.029}{+.004}{-.003}}{\estci{1.055}{+.005}{-.003}}{\estci{1.007}{+.003}{-.003}} & \modelcell{3.4em}{1.455}{1.525}{1.321} \\
After 10 successes & \modelcell{3.4em}{\estci{1.068}{+.008}{-.006}}{\estci{1.083}{+.007}{-.007}}{\estci{1.028}{+.006}{-.004}} & \modelcell{3.4em}{1.716}{1.414}{1.323} & \modelcell{3.4em}{\estci{1.073}{+.007}{-.005}}{\estci{1.113}{+.008}{-.008}}{\estci{1.034}{+.008}{-.007}} & \modelcell{3.4em}{1.592}{1.505}{1.451} \\
Savings threshold & \modelcell{3.4em}{\estci{1.039}{+.005}{-.005}}{\estci{1.043}{+.006}{-.005}}{\estci{1.017}{+.003}{-.003}} & \modelcell{3.4em}{1.375}{1.409}{1.293} & \modelcell{3.4em}{\estci{1.050}{+.003}{-.003}}{\estci{1.061}{+.005}{-.006}}{\estci{1.021}{+.005}{-.004}} & \modelcell{3.4em}{1.383}{1.423}{1.309} \\
Estimate + allowance & \modelcell{3.4em}{\estci{1.008}{+{<}.001}{-{<}.001}}{\estci{1.014}{+.001}{-.001}}{\estci{1.007}{+.002}{-.002}} & \modelcell{3.4em}{1.367}{1.387}{1.286} & \modelcell{3.4em}{\estci{1.016}{+.002}{-.002}}{\estci{1.016}{+.002}{-.001}}{\estci{1.010}{+.002}{-.002}} & \modelcell{3.4em}{1.371}{1.410}{1.301} \\
\midrule
Eager + both checks & \modelcell{3.4em}{\estci{1.024}{+.003}{-.003}}{\estci{1.038}{+.006}{-.004}}{\estci{0.998}{+{<}.001}{-{<}.001}} & \modelcell{3.4em}{1.163}{1.240}{1.014} & \modelcell{3.4em}{\estci{1.015}{+.003}{-.002}}{\estci{1.049}{+.004}{-.003}}{\estci{0.998}{+.003}{-.003}} & \modelcell{3.4em}{1.109}{1.243}{1.066} \\
Arrival-10 + both & \modelcell{3.4em}{\estci{1.019}{+.003}{-.002}}{\estci{1.006}{+.004}{-.003}}{\estci{0.998}{+{<}.001}{-{<}.001}} & \modelcell{3.4em}{1.163}{1.107}{1.014} & \modelcell{3.4em}{\estci{1.011}{+.003}{-.002}}{\estci{1.020}{+.003}{-.002}}{\estci{0.998}{+.003}{-.003}} & \modelcell{3.4em}{1.092}{1.212}{1.066} \\
Estimate + both & \modelcell{3.4em}{\estci{1.000}{+{<}.001}{-{<}.001}}{\estci{1.000}{+{<}.001}{-{<}.001}}{\estci{1.000}{+{<}.001}{-{<}.001}} & \modelcell{3.4em}{1.025}{1.050}{1.008} & \modelcell{3.4em}{\estci{1.001}{+.002}{-.001}}{\estci{1.000}{+{<}.001}{-{<}.001}}{\estci{1.000}{+.001}{-{<}.001}} & \modelcell{3.4em}{1.116}{1.028}{1.064} \\
Count + budget & \modelcell{3.4em}{\estci{0.999}{+{<}.001}{-{<}.001}}{\estci{0.999}{+{<}.001}{-{<}.001}}{\estci{0.999}{+{<}.001}{-{<}.001}} & \modelcell{3.4em}{1.003}{1.003}{1.008} & \modelcell{3.4em}{\estci{1.000}{+{<}.001}{-{<}.001}}{\estci{0.998}{+{<}.001}{-{<}.001}}{\estci{1.000}{+{<}.001}{-{<}.001}} & \modelcell{3.4em}{1.006}{1.004}{1.039} \\
Optimistic + budget & \modelcell{3.4em}{\estci{1.115}{+.008}{-.006}}{\estci{1.114}{+.007}{-.005}}{\estci{1.128}{+.012}{-.008}} & \modelcell{3.4em}{1.359}{1.348}{1.442} & \modelcell{3.4em}{\estci{1.081}{+.005}{-.004}}{\estci{1.110}{+.009}{-.006}}{\estci{1.079}{+.009}{-.007}} & \modelcell{3.4em}{1.384}{1.421}{1.412} \\
One initial try & \modelcell{3.4em}{\estci{1.115}{+.014}{-.014}}{\estci{1.123}{+.024}{-.021}}{\estci{1.126}{+.013}{-.011}} & \modelcell{3.4em}{1.736}{2.003}{1.554} & \modelcell{3.4em}{\estci{1.065}{+.009}{-.007}}{\estci{1.101}{+.019}{-.012}}{\estci{1.068}{+.008}{-.008}} & \modelcell{3.4em}{1.503}{1.967}{1.294} \\
\textbf{PACE} & \modelcell{3.4em}{\statpoint{1.000}}{\statpoint{1.000}}{\statpoint{1.000}} & \modelcell{3.4em}{1.000}{1.000}{1.000} & \modelcell{3.4em}{\statpoint{1.000}}{\statpoint{1.000}}{\statpoint{1.000}} & \modelcell{3.4em}{1.000}{1.000}{1.000} \\
\bottomrule
\end{tabular}
\end{table}

\begin{table}[!htbp]
\centering
\caption{Cost on four recorded arrival sequences, relative to PACE.
Each value divides the policy's mean cost by PACE's mean cost in the same condition, using ten paired runs.
Nonzero-width 95 percent paired bootstrap intervals appear as upper and lower offsets below the mean; exact zero-width intervals are omitted.
Intervals use 2000 shared bootstrap resamples of runs and condition on the tested scenarios and given profiles.
An offset marked $<.001$ is positive but smaller than $0.001$.
Blue, amber, and green identify GLM, DeepSeek, and Qwen, respectively.
AutoRPA and ToolPro are decision-rule adaptations defined in Appendix~\ref{app:baselines}.
Tables~\ref{tab:policysim} and~\ref{tab:realstreams} show the rows for ReAct, AutoRPA, ToolPro, and PACE.
PACE is exactly one by normalization; other fixed ratios also have zero-width intervals.}
\label{tab:realstreams-full}
\tablefont
\setlength{\tabcolsep}{0.6pt}
\renewcommand{\arraystretch}{0.95}
\begin{tabular}{@{}lrrrr@{}}
\toprule
\multicolumn{1}{c}{Policy} & \multicolumn{1}{c}{\shortstack{Sepsis\\($\times$ PACE) $\downarrow$}} & \multicolumn{1}{c}{\shortstack{BPI 2019\\($\times$ PACE) $\downarrow$}} & \multicolumn{1}{c}{\shortstack{Wiki tools\\($\times$ PACE) $\downarrow$}} & \multicolumn{1}{c}{\shortstack{Wiki humans\\($\times$ PACE) $\downarrow$}} \\
\midrule
ReAct & \modelcell{3.4em}{\estci{0.971}{+.019}{-.022}}{\estci{0.998}{+.046}{-.048}}{\estci{0.989}{+.014}{-.013}} & \modelcell{3.4em}{\estci{1.531}{+.272}{-.183}}{\estci{1.603}{+.299}{-.222}}{\estci{1.393}{+.179}{-.144}} & \modelcell{3.4em}{\estci{1.730}{+.669}{-.387}}{\estci{1.480}{+.428}{-.247}}{\estci{1.288}{+.303}{-.166}} & \modelcell{3.4em}{\estci{1.056}{+.017}{-.015}}{\estci{1.060}{+.016}{-.016}}{\estci{1.040}{+.013}{-.012}} \\
AutoRPA & \modelcell{3.4em}{\estci{1.145}{+.033}{-.029}}{\estci{1.123}{+.036}{-.037}}{\estci{1.246}{+.045}{-.041}} & \modelcell{3.4em}{\estci{1.582}{+.278}{-.190}}{\estci{1.641}{+.303}{-.225}}{\estci{1.489}{+.181}{-.146}} & \modelcell{3.4em}{\estci{1.754}{+.676}{-.392}}{\estci{1.498}{+.429}{-.251}}{\estci{1.331}{+.317}{-.172}} & \modelcell{3.4em}{\estci{1.164}{+.017}{-.015}}{\estci{1.144}{+.014}{-.015}}{\estci{1.232}{+.016}{-.014}} \\
ToolPro & \modelcell{3.4em}{\estci{0.971}{+.019}{-.022}}{\estci{0.998}{+.046}{-.048}}{\estci{0.989}{+.014}{-.013}} & \modelcell{3.4em}{\estci{1.531}{+.272}{-.183}}{\estci{1.603}{+.299}{-.222}}{\estci{1.393}{+.179}{-.144}} & \modelcell{3.4em}{\estci{1.730}{+.669}{-.387}}{\estci{1.480}{+.428}{-.247}}{\estci{1.288}{+.303}{-.166}} & \modelcell{3.4em}{\estci{1.056}{+.017}{-.015}}{\estci{1.060}{+.016}{-.016}}{\estci{1.040}{+.013}{-.012}} \\
Eager + allowance & \modelcell{3.4em}{\estci{1.007}{+.007}{-.006}}{\estci{1.027}{+.016}{-.019}}{\estci{0.997}{+.001}{-.001}} & \modelcell{3.4em}{\estci{1.002}{+.003}{-.002}}{\estci{0.995}{+.012}{-.016}}{\estci{1.057}{+.072}{-.056}} & \modelcell{3.4em}{\estci{1.004}{+.009}{-.006}}{\estci{0.997}{+.030}{-.038}}{\estci{1.003}{+.005}{-.006}} & \modelcell{3.4em}{\estci{1.005}{+.002}{-.002}}{\estci{1.021}{+.001}{-.001}}{\estci{1.001}{+{<}.001}{-.001}} \\
After 10 arrivals & \modelcell{3.4em}{\estci{1.010}{+.007}{-.006}}{\estci{1.010}{+.007}{-.008}}{\estci{0.997}{+.001}{-.001}} & \modelcell{3.4em}{\estci{1.003}{+.004}{-.003}}{\estci{0.988}{+.012}{-.017}}{\estci{1.057}{+.072}{-.056}} & \modelcell{3.4em}{\estci{1.005}{+.009}{-.006}}{\estci{0.994}{+.030}{-.038}}{\estci{1.003}{+.005}{-.006}} & \modelcell{3.4em}{\estci{1.006}{+.002}{-.002}}{\estci{1.008}{+.010}{-.005}}{\estci{1.001}{+{<}.001}{-.001}} \\
After 10 successes & \modelcell{3.4em}{\estci{1.007}{+.009}{-.009}}{\estci{1.011}{+.016}{-.016}}{\estci{1.003}{+.007}{-.006}} & \modelcell{3.4em}{\estci{1.123}{+.053}{-.040}}{\estci{1.131}{+.061}{-.052}}{\estci{1.147}{+.107}{-.068}} & \modelcell{3.4em}{\estci{1.241}{+.235}{-.139}}{\estci{1.128}{+.098}{-.070}}{\estci{1.068}{+.057}{-.032}} & \modelcell{3.4em}{\estci{1.024}{+.004}{-.004}}{\estci{1.025}{+.009}{-.007}}{\estci{1.015}{+.005}{-.004}} \\
Savings threshold & \modelcell{3.4em}{\estci{1.000}{+{<}.001}{-{<}.001}}{\estci{1.004}{+.005}{-.004}}{\estci{0.999}{+{<}.001}{-{<}.001}} & \modelcell{3.4em}{\estci{1.066}{+.029}{-.021}}{\estci{1.057}{+.034}{-.028}}{\estci{1.072}{+.033}{-.031}} & \modelcell{3.4em}{\estci{1.070}{+.053}{-.031}}{\estci{1.056}{+.053}{-.041}}{\estci{1.025}{+.016}{-.009}} & \modelcell{3.4em}{\estci{1.010}{+.002}{-.002}}{\estci{1.016}{+.009}{-.006}}{\estci{1.008}{+.002}{-.002}} \\
Estimate + allowance & \modelcell{3.4em}{\statpoint{1.000}}{\statpoint{1.000}}{\statpoint{1.000}} & \modelcell{3.4em}{\estci{0.999}{+{<}.001}{-{<}.001}}{\estci{1.002}{+.006}{-.003}}{\estci{1.057}{+.071}{-.055}} & \modelcell{3.4em}{\estci{1.000}{+.007}{-.005}}{\estci{1.013}{+.022}{-.015}}{\estci{1.004}{+.005}{-.005}} & \modelcell{3.4em}{\estci{0.999}{+{<}.001}{-{<}.001}}{\estci{1.005}{+.010}{-.005}}{\estci{1.000}{+{<}.001}{-{<}.001}} \\
\midrule
Eager + both checks & \modelcell{3.4em}{\estci{1.007}{+.007}{-.006}}{\estci{1.027}{+.016}{-.019}}{\estci{0.997}{+.001}{-.001}} & \modelcell{3.4em}{\estci{1.003}{+.003}{-.002}}{\estci{0.995}{+.012}{-.016}}{\estci{1.057}{+.072}{-.056}} & \modelcell{3.4em}{\estci{1.004}{+.009}{-.006}}{\estci{0.998}{+.031}{-.038}}{\estci{1.000}{+.004}{-.003}} & \modelcell{3.4em}{\estci{1.005}{+.002}{-.002}}{\estci{1.021}{+.001}{-.001}}{\estci{1.001}{+{<}.001}{-.001}} \\
Arrival-10 + both & \modelcell{3.4em}{\estci{1.010}{+.007}{-.006}}{\estci{1.010}{+.007}{-.008}}{\estci{0.997}{+.001}{-.001}} & \modelcell{3.4em}{\estci{1.003}{+.004}{-.003}}{\estci{0.988}{+.012}{-.017}}{\estci{1.057}{+.072}{-.056}} & \modelcell{3.4em}{\estci{1.005}{+.010}{-.006}}{\estci{0.997}{+.031}{-.039}}{\estci{1.000}{+.004}{-.003}} & \modelcell{3.4em}{\estci{1.006}{+.002}{-.002}}{\estci{1.008}{+.010}{-.005}}{\estci{1.001}{+{<}.001}{-.001}} \\
Estimate + both & \modelcell{3.4em}{\statpoint{1.000}}{\statpoint{1.000}}{\statpoint{1.000}} & \modelcell{3.4em}{\estci{0.999}{+{<}.001}{-{<}.001}}{\estci{1.002}{+.006}{-.003}}{\estci{1.057}{+.071}{-.055}} & \modelcell{3.4em}{\estci{1.001}{+.007}{-.005}}{\estci{1.012}{+.023}{-.014}}{\estci{1.001}{+.004}{-.003}} & \modelcell{3.4em}{\estci{0.999}{+{<}.001}{-{<}.001}}{\estci{1.005}{+.010}{-.005}}{\estci{1.000}{+{<}.001}{-{<}.001}} \\
Count + budget & \modelcell{3.4em}{\statpoint{1.000}}{\estci{1.000}{+{<}.001}{-{<}.001}}{\estci{0.999}{+{<}.001}{-{<}.001}} & \modelcell{3.4em}{\estci{1.000}{+{<}.001}{-{<}.001}}{\estci{1.000}{+{<}.001}{-{<}.001}}{\estci{1.000}{+{<}.001}{-{<}.001}} & \modelcell{3.4em}{\estci{0.999}{+{<}.001}{-{<}.001}}{\estci{1.000}{+.001}{-.001}}{\estci{0.999}{+{<}.001}{-{<}.001}} & \modelcell{3.4em}{\estci{1.000}{+{<}.001}{-{<}.001}}{\estci{1.000}{+{<}.001}{-{<}.001}}{\estci{1.000}{+{<}.001}{-{<}.001}} \\
Optimistic + budget & \modelcell{3.4em}{\estci{1.159}{+.032}{-.035}}{\estci{1.106}{+.032}{-.028}}{\estci{1.206}{+.017}{-.016}} & \modelcell{3.4em}{\estci{1.093}{+.025}{-.022}}{\estci{1.058}{+.014}{-.014}}{\estci{1.042}{+.049}{-.054}} & \modelcell{3.4em}{\estci{1.067}{+.046}{-.029}}{\estci{1.008}{+.041}{-.061}}{\estci{0.964}{+.090}{-.099}} & \modelcell{3.4em}{\estci{1.157}{+.007}{-.006}}{\estci{1.123}{+.007}{-.008}}{\estci{1.248}{+.007}{-.006}} \\
One initial try & \modelcell{3.4em}{\estci{1.075}{+.032}{-.028}}{\estci{1.041}{+.027}{-.030}}{\estci{1.136}{+.036}{-.037}} & \modelcell{3.4em}{\estci{1.069}{+.060}{-.041}}{\estci{1.067}{+.047}{-.047}}{\estci{1.066}{+.094}{-.082}} & \modelcell{3.4em}{\estci{1.076}{+.100}{-.051}}{\estci{1.096}{+.088}{-.077}}{\estci{0.989}{+.030}{-.046}} & \modelcell{3.4em}{\estci{1.057}{+.005}{-.005}}{\estci{1.041}{+.007}{-.006}}{\estci{1.108}{+.007}{-.006}} \\
\textbf{PACE} & \modelcell{3.4em}{\statpoint{1.000}}{\statpoint{1.000}}{\statpoint{1.000}} & \modelcell{3.4em}{\statpoint{1.000}}{\statpoint{1.000}}{\statpoint{1.000}} & \modelcell{3.4em}{\statpoint{1.000}}{\statpoint{1.000}}{\statpoint{1.000}} & \modelcell{3.4em}{\statpoint{1.000}}{\statpoint{1.000}}{\statpoint{1.000}} \\
\bottomrule
\end{tabular}
\end{table}

\subsection{Component costs}
\label{app:ablation-absolute}
\looseness=-1
Table~\ref{tab:ablation-absolute} gives cost ratios relative to PACE for the ablations, and Table~\ref{tab:ablation-effects-full} gives their paired percentage changes with intervals.
A change is the mean of $100(K_{\mathrm{variant}}/K_{\mathrm{PACE}}-1)$ over fixed combinations of models and conditions, using each condition's mean costs.
Effects can interact and are not additive shares of a total saving.
PACE's change is exactly zero because we compare PACE with itself.
On the recorded logs, equal attempt costs raises mean cost for each model.
Removing the budget has little effect on mean cost on the recorded logs, while its worst grid cost increases.
By contrast, the savings-test and cost-estimate ablations keep the same budget, which keeps their maximum cost ratios to ReAct near the same limit.
Equal attempt costs is cheaper than PACE in $62$ and $60$ conditions of the two grids, so its average loss is not uniform across conditions.
\begin{table}[htbp]
\centering
\caption{Cost ratios for the component ablations, relative to PACE.
The first three columns show the mean of the cost ratios to PACE computed separately for each condition, with positive and negative offsets of the 95 percent paired bootstrap interval.
Grid max is the largest tested condition mean across both grids and all three models.
Blue, amber, and green identify GLM, DeepSeek, and Qwen, respectively.
An offset marked $<.001$ is positive but smaller than $0.001$.
Equal attempt costs assumes $C^{\mathrm{fail}}=C$ when estimating compilation cost, while keeping actual charges and reservations unchanged.
No cost budget removes reservations and budget-triggered program list clearing.}
\label{tab:ablation-absolute}
\tablefont
\setlength{\tabcolsep}{1.0pt}
\renewcommand{\arraystretch}{0.95}
\begin{tabular}{@{}lrrrr@{}}
\toprule
\multicolumn{1}{c}{Variant} & \multicolumn{1}{c}{\shortstack{Grid 1 mean\\($\times$ PACE)}} & \multicolumn{1}{c}{\shortstack{Grid 2 mean\\($\times$ PACE)}} & \multicolumn{1}{c}{\shortstack{Logs mean\\($\times$ PACE)}} & \multicolumn{1}{c}{\shortstack{Grid max\\($\times$ PACE)}} \\
\midrule
\textbf{PACE} & \modelcell{3.4em}{\statpoint{1.000}}{\statpoint{1.000}}{\statpoint{1.000}} & \modelcell{3.4em}{\statpoint{1.000}}{\statpoint{1.000}}{\statpoint{1.000}} & \modelcell{3.4em}{\statpoint{1.000}}{\statpoint{1.000}}{\statpoint{1.000}} & \statpoint{1.000} \\
No savings test & \modelcell{3.4em}{\estci{1.174}{+.011}{-.011}}{\estci{1.186}{+.014}{-.010}}{\estci{1.181}{+.017}{-.011}} & \modelcell{3.4em}{\estci{1.140}{+.012}{-.007}}{\estci{1.167}{+.012}{-.009}}{\estci{1.121}{+.014}{-.009}} & \modelcell{3.4em}{\estci{1.581}{+.149}{-.087}}{\estci{1.548}{+.086}{-.064}}{\estci{1.451}{+.092}{-.068}} & \statpoint{1.957} \\
No cost budget & \modelcell{3.4em}{\estci{1.008}{+{<}.001}{-{<}.001}}{\estci{1.014}{+.001}{-.001}}{\estci{1.007}{+.001}{-.002}} & \modelcell{3.4em}{\estci{1.015}{+.001}{-{<}.001}}{\estci{1.016}{+{<}.001}{-{<}.001}}{\estci{1.010}{+.002}{-.001}} & \modelcell{3.4em}{\estci{1.000}{+{<}.001}{-{<}.001}}{\estci{1.000}{+{<}.001}{-{<}.001}}{\estci{1.000}{+{<}.001}{-{<}.001}} & \statpoint{1.410} \\
Equal attempt costs & \modelcell{3.4em}{\estci{1.100}{+.009}{-.006}}{\estci{1.093}{+.008}{-.006}}{\estci{1.119}{+.010}{-.006}} & \modelcell{3.4em}{\estci{1.082}{+.004}{-.002}}{\estci{1.095}{+.010}{-.005}}{\estci{1.077}{+.010}{-.006}} & \modelcell{3.4em}{\estci{1.227}{+.041}{-.032}}{\estci{1.175}{+.067}{-.059}}{\estci{1.297}{+.056}{-.055}} & \statpoint{1.568} \\
Eager retry & \modelcell{3.4em}{\estci{10.317}{+.481}{-.262}}{\estci{8.414}{+.745}{-.301}}{\estci{18.329}{+.659}{-.458}} & \modelcell{3.4em}{\estci{10.919}{+.555}{-.532}}{\estci{9.352}{+.614}{-.341}}{\estci{19.286}{+.694}{-.467}} & \modelcell{3.4em}{\estci{3.303}{+.490}{-.437}}{\estci{3.326}{+.566}{-.524}}{\estci{4.922}{+.587}{-.576}} & \statpoint{195.631} \\
\bottomrule
\end{tabular}
\end{table}

\begin{table}[htbp]
\centering
\caption{Cost change after removing or simplifying PACE components.
$\Delta$ is the mean percentage cost change from full PACE across conditions. Positive values mean higher cost.
Blue, amber, and green identify GLM, DeepSeek, and Qwen, respectively.
Offsets show the conditional 95 percent paired interval in percentage points; exact zero-width intervals are omitted.
Grid max is the largest tested condition-mean ratio, without a sampling interval.}
\label{tab:ablation-effects-full}
\tablefont
\setlength{\tabcolsep}{0.6pt}
\renewcommand{\arraystretch}{0.95}
\begin{tabular}{@{}lrrrr@{}}
\toprule
\multicolumn{1}{c}{Variant} & \multicolumn{1}{c}{\shortstack{Grid 1 $\Delta$ cost\\(\%)}} & \multicolumn{1}{c}{\shortstack{Grid 2 $\Delta$ cost\\(\%)}} & \multicolumn{1}{c}{\shortstack{Logs $\Delta$ cost\\(\%)}} & \multicolumn{1}{c}{\shortstack{Grid max\\($\times$ PACE)}} \\
\midrule
\textbf{PACE} & \modelcell{3.8em}{\statpoint{0.0}}{\statpoint{0.0}}{\statpoint{0.0}} & \modelcell{3.8em}{\statpoint{0.0}}{\statpoint{0.0}}{\statpoint{0.0}} & \modelcell{3.8em}{\statpoint{0.0}}{\statpoint{0.0}}{\statpoint{0.0}} & \modelcell{3.4em}{1.000}{1.000}{1.000} \\
No savings test & \modelcell{3.8em}{\estci{17.4}{+1.1}{-1.1}}{\estci{18.6}{+1.4}{-1.0}}{\estci{18.1}{+1.7}{-1.1}} & \modelcell{3.8em}{\estci{14.0}{+1.2}{-.7}}{\estci{16.7}{+1.2}{-.9}}{\estci{12.1}{+1.4}{-.9}} & \modelcell{3.8em}{\estci{58.1}{+14.9}{-8.7}}{\estci{54.8}{+8.6}{-6.4}}{\estci{45.1}{+9.2}{-6.8}} & \modelcell{3.4em}{1.957}{1.810}{1.901} \\
No cost budget & \modelcell{3.8em}{\estci{0.8}{+{<}.1}{-{<}.1}}{\estci{1.4}{+.1}{-.1}}{\estci{0.7}{+.1}{-.2}} & \modelcell{3.8em}{\estci{1.5}{+.1}{-{<}.1}}{\estci{1.6}{+{<}.1}{-{<}.1}}{\estci{1.0}{+.2}{-.1}} & \modelcell{3.8em}{\estci{0.016}{+.080}{-.044}}{\estci{-0.013}{+.010}{-.015}}{\estci{0.002}{+.016}{-.013}} & \modelcell{3.4em}{1.371}{1.410}{1.301} \\
Equal attempt costs & \modelcell{3.8em}{\estci{10.0}{+.9}{-.6}}{\estci{9.3}{+.8}{-.6}}{\estci{11.9}{+1.0}{-.6}} & \modelcell{3.8em}{\estci{8.2}{+.4}{-.2}}{\estci{9.5}{+1.0}{-.5}}{\estci{7.7}{+1.0}{-.6}} & \modelcell{3.8em}{\estci{22.7}{+4.1}{-3.2}}{\estci{17.5}{+6.7}{-5.9}}{\estci{29.7}{+5.6}{-5.5}} & \modelcell{3.4em}{1.568}{1.567}{1.466} \\
Eager retry & \modelcell{3.8em}{\estci{931.7}{+48.1}{-26.2}}{\estci{741.4}{+74.5}{-30.1}}{\estci{1732.9}{+65.9}{-45.8}} & \modelcell{3.8em}{\estci{991.9}{+55.5}{-53.2}}{\estci{835.2}{+61.4}{-34.1}}{\estci{1828.6}{+69.4}{-46.7}} & \modelcell{3.8em}{\estci{230.3}{+49.0}{-43.7}}{\estci{232.6}{+56.6}{-52.4}}{\estci{392.2}{+58.7}{-57.6}} & \modelcell{3.4em}{92.863}{73.996}{195.631} \\
\bottomrule
\end{tabular}
\end{table}

\section{Live GUI execution}
\label{app:live}

We test agent fallback and call-cost reservations on the OpenApps Calendar web application \citep{ullrich2025openapps}.
The fixed program was compiled by GLM 5.3 Flash from one agent trace on the same calendar application before this study.
Six runs without model calls confirmed task completion on the unchanged interface and failure under a renamed submit button or a blocking dialog.
This study uses a scheduled compilation attempt and does not evaluate PACE's online compilation choice.

\subsection{Paired task execution}
Four seeds specify event title, date, description, location, URL, and invitees.
For each seed, we test a clean interface, a renamed submit button, and a blocking dialog, giving twelve agent/program pairs with matching initial states.
Both paths use GLM 5.3 Flash and the same natural-language goal.
The agent reads screenshots and an accessibility tree in headless Chromium.
The program receives a model-extracted binding and uses Playwright locators.
An execution error triggers agent fallback from the current GUI state, without resetting it.
A task has a total limit of twenty model calls.
We stop the task if a model call fails or its cost cannot be confirmed.
The evaluator checks the created event and whether unrelated application state remains unchanged.

\begin{table}[htbp]
\centering
\caption{Observed outcomes and serving costs for twelve paired Calendar tasks.
Mean cost bounds are in milli-USD and include extraction and any fallback.
Their endpoints combine known charges with the saved reservations for unconfirmed costs; they are not confidence intervals.
Interrupted runs remain unsuccessful outcomes, and the agent has one additional step-limit failure.
Serving costs exclude the compilation of the fixed program and the separate compilation attempt.}
\label{tab:live-main}
\tablefont
\setlength{\tabcolsep}{4pt}
\begin{tabular}{@{}lrrr@{}}
\toprule
\multicolumn{1}{c}{Serving path} & \multicolumn{1}{c}{Success (of 12)} & \multicolumn{1}{c}{Interrupted (count)} & \multicolumn{1}{c}{\shortstack{Mean cost bounds\\(milli-USD)}} \\
\midrule
Agent & 7/12 & 4 & [8.142, 21.590] \\
Program + fallback & 11/12 & 1 & [1.202, 4.564] \\
\bottomrule
\end{tabular}
\end{table}

The agent succeeds in seven of twelve tasks, with four call interruptions and one step-limit failure.
All four clean program runs succeed without fallback.
All eight program runs with changed interfaces use fallback, and seven succeed after keeping all six target fields at handoff.
The remaining run has no correct target fields at handoff and stops on a call interruption.
In only five pairs with changed interfaces do both policies reach the changed part before the runs end.
Both succeed in all five.
Results from these five pairs do not replace the comparison across all twelve pairs.
Tables~\ref{tab:live-pairs} and~\ref{tab:live-tight} keep all outcomes, including interrupted and denied tasks.

\begin{table}[htbp]
\centering
\caption{All paired tasks specified in advance, including provider interruptions and unsuccessful execution.
Costs use milli-USD.
Bracketed values give lower and upper accounting bounds, and costs shown without brackets are confirmed.
P+F means program execution followed by agent fallback on an execution error.
Fields counts the six target values already correct when fallback begins.
All pairs start from matching application and interface states.}
\label{tab:live-pairs}
\tablefont
\setlength{\tabcolsep}{2pt}
\begin{tabular}{@{}lrrrrrr@{}}
\toprule
\multicolumn{1}{c}{Seed} & \multicolumn{1}{c}{Condition} & \multicolumn{1}{c}{Agent} & \multicolumn{1}{c}{P+F} & \multicolumn{1}{c}{\shortstack{Agent cost\\(milli-USD)}} & \multicolumn{1}{c}{\shortstack{P+F cost\\(milli-USD)}} & \multicolumn{1}{c}{Fields (of 6)} \\
\midrule
0 & Clean & Interrupted & Success & [5.147, 45.493] & 0.048 & -- \\
0 & Renamed submit & Interrupted & Success & [2.050, 42.396] & 0.763 & 6/6 \\
0 & Blocking dialog & Step limit & Success & 17.156 & 1.255 & 6/6 \\
1 & Clean & Success & Success & 9.687 & 0.048 & -- \\
1 & Renamed submit & Success & Success & 6.831 & 0.763 & 6/6 \\
1 & Blocking dialog & Success & Success & 11.091 & 1.129 & 6/6 \\
2 & Clean & Interrupted & Success & [14.067, 54.413] & 0.048 & -- \\
2 & Renamed submit & Success & Success & 8.658 & 0.891 & 6/6 \\
2 & Blocking dialog & Interrupted & Interrupted & [0.000, 40.346] & [7.354, 47.699] & 0/6 \\
3 & Clean & Success & Success & 6.802 & 0.049 & -- \\
3 & Renamed submit & Success & Success & 7.876 & 0.893 & 6/6 \\
3 & Blocking dialog & Success & Success & 8.334 & 1.184 & 6/6 \\
\bottomrule
\end{tabular}
\end{table}

\begin{table}[htbp]
\centering
\caption{Four tight-budget tests in their arrival order set in advance.
Costs and prefix totals use milli-USD.
The reference increases by 20 milli-USD per scheduled arrival, including a denied arrival.
The prefix total includes charges and saved reservations for unsettled calls.
The first task is denied before any model call or GUI service, so this test does not satisfy the theorem's assumption that default service costs at most its reference charge.}
\label{tab:live-tight}
\tablefont
\setlength{\tabcolsep}{3pt}
\begin{tabular}{@{}lrrrrr@{}}
\toprule
\multicolumn{1}{c}{Arrival} & \multicolumn{1}{c}{Condition} & \multicolumn{1}{c}{Outcome} & \multicolumn{1}{c}{\shortstack{Cost\\(milli-USD)}} & \multicolumn{1}{c}{\shortstack{Prefix ceiling\\(milli-USD)}} & \multicolumn{1}{c}{\shortstack{Prefix total\\(milli-USD)}} \\
\midrule
1 & Renamed submit & Denied & 0.000 & 25.000 & 0.000 \\
2 & Blocking dialog & Success & 1.257 & 50.000 & 1.257 \\
3 & Renamed submit & Success & 0.890 & 75.000 & 2.147 \\
4 & Blocking dialog & Interrupted & [0.215, 40.561] & 100.000 & 42.708 \\
\bottomrule
\end{tabular}
\end{table}

\subsection{Reservations}
\looseness=-1
Before a call, the runner reserves a charge using declared input/output limits and the recorded price schedule.
We replace the reservation with the confirmed charge.
An unsuccessful response or a response without a confirmed charge ends the task without a retry and keeps the reserved charge.
All $221$ actual requests pass the recorded budget checks.
Of these, $215$ have confirmed charges and six have unconfirmed costs.
Known charges total USD $0.121359$, with an upper bound of USD $0.363432$ after including reservations for calls with unconfirmed costs.

The main reference adds USD $0.08$ per scheduled task and permits $1.25$ times the cumulative reference.
We schedule a separate compilation attempt after the first three agent runs end.
It makes one builder call without repair and costs USD $0.006872$.
The resulting program passes checks on two unseen bindings supplied without model extraction.
We keep the fixed program for serving.
We include this compilation charge in the cumulative cost checked against the main budget and exclude it from the serving-cost comparison.

Four additional tasks use a separate reference of USD $0.02$ per arrival.
Two succeed, one stops on a call interruption, and the first is denied because its USD $0.039834$ binding reservation exceeds the initial USD $0.025$ allowance.
No model call or GUI service occurs for that denied task, although it still increases the cumulative reference by USD $0.02$.
Each recorded cumulative total stays within the stated budget.
These references are given allowances, not measured costs from running every task with the agent, and the denial falls outside the theorem's assumption that default service costs at most its reference charge.

\subsection{Cost bounds with unconfirmed costs}
For each request without a confirmed charge, we keep the known charge as a lower bound and the reservation as an upper bound.
Across all twelve pairs, mean serving costs are bounded by $[8.142,21.590]$ milli-USD for the agent and $[1.202,4.564]$ for the program path.
We do not restrict this comparison to the eight pairs with fully confirmed costs.
For a paired difference, the lower endpoint subtracts the agent's upper cost from the program's lower cost.
The upper endpoint uses the opposite endpoints.
The mean difference is bounded by $[-20.388,-3.577]$ milli-USD.

\looseness=-1
We enumerate all $4^4=256$ ordered bootstrap draws of the four seed clusters, keeping both policies and all three conditions within each selected seed.
Using linear interpolation, we take the $2.5$th percentile of the lower mean differences as the lower endpoint and the $97.5$th percentile of the upper mean differences as the upper endpoint.
This gives a $95\%$ envelope of $[-33.017,4.621]$ milli-USD, which includes zero.
It uses the lower and upper cost bounds in the bootstrap, rather than giving a confidence interval based on exact costs.
The four clusters, one task family, and provider interruptions limit population inference and conclusions about successful GUI service.

\end{document}